%% file: main.tex
\documentclass[11pt]{article}

\usepackage[preprint]{acl}

\usepackage{times}
\usepackage{latexsym}
\usepackage[T1]{fontenc}
\usepackage[utf8]{inputenc}
\usepackage{microtype}
\usepackage{inconsolata}
\usepackage{graphicx}
\usepackage{booktabs}
\usepackage{amsmath,amssymb,amsfonts,amsthm}
\usepackage{bm}
\usepackage{multirow}
\usepackage{xcolor}
\usepackage{colortbl}
\usepackage{array}
\usepackage{tabularx}
\usepackage{enumitem}
\usepackage[ruled,vlined]{algorithm2e}
\SetKwInput{KwInput}{Input}
\SetKwInput{KwOutput}{Output}

\newtheorem{proposition}{Proposition}

\newtheorem{corollary}{Corollary}

\newtheorem{remark}{Remark}

\definecolor{oursbg}{RGB}{253,238,240}
\definecolor{simred}{RGB}{165,30,45}

\newcommand{\method}{\textsc{Arcus}}
\newcommand{\E}{\mathbb{E}}

\newcommand{\Bin}{\mathrm{Bin}}
\newcommand{\N}{\mathcal{N}}
\newcommand{\pst}{\psi^{\star}}
\newcommand{\hpi}{\tfrac{\pi}{2}}
\newcommand{\yhat}{\hat{y}}
\newcommand{\Yhat}{\widehat{Y}}

\input{tables/stats.tex}

\title{Prompts Live on an Arc: Gaussian Curricula in Fisher--Rao\\Coordinates for Rollout-Efficient GRPO}

\author{
  \textbf{Mei Okonkwo\textsuperscript{1}},
  \textbf{Pixel Nomand\textsuperscript{1}},
  \textbf{Julian Berg\textsuperscript{2}},
  \textbf{Elena Voss\textsuperscript{1}},
  \\
  \textbf{Lena Park\textsuperscript{1}},
  \textbf{Marcus Hale\textsuperscript{2}},
  \textbf{Adrian Cho\textsuperscript{2}},
  \textbf{Sofia Reyes\textsuperscript{1}}
  \\
  \\
  \textsuperscript{1}University of Wisconsin--Madison \\
  \textsuperscript{2}University of Washington
}

\begin{document}
\maketitle

\begin{abstract}
Group relative policy optimization (GRPO) learns only from prompts whose sampled responses disagree: a group that is entirely correct or entirely incorrect has zero reward variance, contributes no gradient, and still consumes its rollouts. Prompt-selection methods reduce this waste by steering sampling toward intermediate pass rates, but they choose the target, its width, and the uncertainty model heuristically, in raw pass-rate or logit coordinates. We show that GRPO comes with a natural coordinate for pass rates: the arc length $\psi=\arcsin\sqrt{p}$ on the Bernoulli Fisher--Rao manifold. In arc length, the expected GRPO update is uniform up to two boundary ramps; the probability of a zero-variance group is bounded by two Gaussian boundary layers of width $1/\sqrt{2G}$; pass-rate evidence has constant noise; and the gradients of the pass@$k$ and pass$^k$ objectives are Gaussians whose center and width follow from $k$ in closed form. A prompt curriculum for GRPO is therefore a Gaussian in arc length, and choosing its center amounts to choosing the objective. We turn this observation into \method{}, a drop-in sampler that tracks every prompt with a Kalman filter in arc length, scores prompts by an objective-matched Gaussian kernel times the predicted probability of an informative group, keeps only informative groups for the unchanged GRPO update, and paces the target toward the hardest objective whose predicted yield stays within a small slack of the best. Across six mathematical reasoning benchmarks and three backbones, \method{} improves the average accuracy of GRPO by \gainGRPOmin--\gainGRPOmax{} points and that of dynamic sampling by \gainDSmin--\gainDSmax{} points, while generating \saveDSmin--\saveDSmax\% fewer rollouts than dynamic sampling.
\end{abstract}

\section{Introduction}
\label{sec:intro}

Reinforcement learning with verifiable rewards (RLVR) is now the standard recipe for eliciting multi-step reasoning from language models~\citep{shao2024deepseekmath,deepseekai2025r1,kimi2025k15,lambert2024tulu}. Its workhorse, group relative policy optimization (GRPO), samples $G$ responses per prompt and normalizes each reward by the mean and standard deviation of its group~\citep{shao2024deepseekmath}. This removes the critic, but a prompt then teaches nothing unless its responses disagree: when all responses are correct or all are wrong, every advantage vanishes, yet the $G$ rollouts, the dominant cost of a step, have been paid for. Such \emph{zero-variance} groups make up about half of a uniformly sampled batch, and their share grows as the policy improves~\citep{yu2025dapo,zheng2025greso,le2026noprompt}.

Two families of methods attack the waste. \emph{Evaluate-then-filter} methods such as dynamic sampling (DS) oversample prompts and keep only informative groups~\citep{yu2025dapo,bae2026online}, which can double or triple the rollouts per step. \emph{Predict-then-select} methods forecast each prompt's pass rate from its history and sample near an intermediate target, using Beta posteriors~\citep{qu2026mopps}, hidden Markov dynamics~\citep{mao2026dps}, Kalman filters~\citep{zhu2026kgps}, skipping rules~\citep{zheng2025greso}, or Gaussian and softmax weights~\citep{lin2026fgexpo,zeng2025cures}; curricula also move the target by reward feedback~\citep{shi2026adarft,chen2025sec}. Three choices remain heuristic and are made independently: the \emph{coordinate} for comparing pass rates (raw $p$ or logit), the \emph{width} of the preferred region, and its \emph{target}, usually fixed at $p{=}0.5$.

These choices are not free, because GRPO has already fixed a geometry: its normalization makes the population update ascend $\sum_q 2\arcsin\sqrt{p_q}$~\citep{davis2025objective,mroueh2025grpo}. We observe that $\psi=\arcsin\sqrt{p}$ is the Fisher--Rao arc length of the Bernoulli family~\citep{rao1945information,amari2016information}, and that four seemingly unrelated quantities become simple on this arc (Figure~\ref{fig:geometry}). \emph{(i)} GRPO's expected update is uniform in $\psi$ up to two boundary ramps. \emph{(ii)} The probability of a zero-variance group is bounded by two Gaussian boundary layers, $e^{-G\psi^2}+e^{-G(\pi/2-\psi)^2}$. \emph{(iii)} Pass-rate evidence has constant noise, $1/(4n)$ per $n$ rollouts. \emph{(iv)} The gradients of pass@$k$ and pass$^k$ are Gaussians with center $\arcsin(1/\sqrt{2k})$ and width $1/(2\sqrt{k})$. Hence a prompt curriculum for GRPO should be a Gaussian in arc length whose width is set by the objective, whose uncertainty enters by adding variances, and whose center selects an objective along the family pass$^k\leftrightarrow$pass@$1\leftrightarrow$pass@$k$.

\begin{figure*}[t]
\centering
\includegraphics[width=\textwidth]{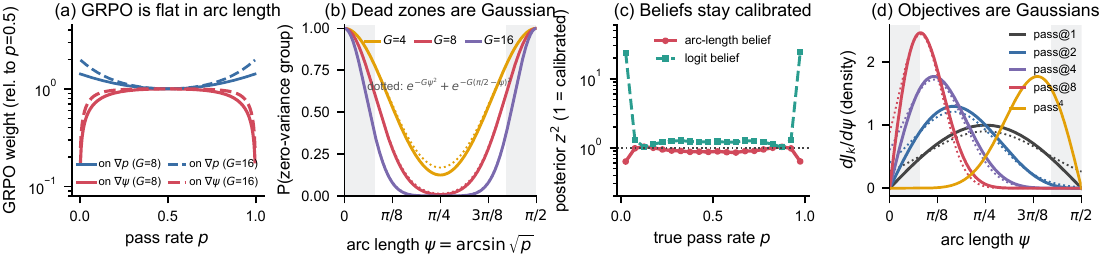}
\caption{\textbf{Prompts live on an arc} (exact curves). (a) GRPO's expected update as a weight on $\nabla p$ is U-shaped (blue); as a weight on $\nabla\psi$, $\psi=\arcsin\sqrt{p}$, it is flat up to two ramps (red; Prop.~\ref{prop:uniform}). (b) Zero-variance probability (solid) and its Gaussian bound (dotted; Prop.~\ref{prop:deadzone}). (c) After one group of $n{=}8$, arc-length posteriors are calibrated while logit posteriors are over-confident by $20\times$ where zero-variance prompts live (Prop.~\ref{prop:noise}). (d) Arc-length gradients of pass@$k$ and pass$^4$ (solid) and their Gaussian forms (dotted; Prop.~\ref{prop:kernel}).}
\label{fig:geometry}
\end{figure*}

We build \method{} (\emph{ARc-length CUrriculum Sampling}) on this geometry. It keeps a Kalman belief over each prompt's arc length. Mixed groups enter through a variance-stabilized observation and zero-variance groups through their boundary-layer likelihood, so the filter stays calibrated where logit filters fail, and a drift term removes the stale-easy bias of history-based selectors. Prompts are scored by their overlap with an objective-matched Gaussian kernel times the closed-form probability of an informative group. Only informative groups enter the unchanged GRPO update, and a pacing rule moves the target to the hardest objective whose predicted yield stays within $\varepsilon$ of the best. Every quantity is a closed-form scalar, and no rollout is spent on prediction. Our contributions:
\begin{itemize}[leftmargin=*,itemsep=1pt,topsep=2pt]
\item \textbf{Geometry.} We identify arc length as GRPO's native coordinate, prove that uniform updates, Gaussian dead zones, homoscedastic evidence, and Gaussian objective kernels hold in it, and show that every prompt sampler defines the objective GRPO ascends (\S\ref{sec:geometry}).
\item \textbf{Method.} \method{} derives its kernel width, uncertainty handling, dead-zone correction, and target pacing from the geometry and leaves GRPO's loss untouched (\S\ref{sec:method}).
\item \textbf{Evidence.} On three backbones, \method{} beats nine prompt-selection baselines, outperforms DS with \saveDSmin--\saveDSmax\% fewer rollouts, and reaches DS's final accuracy with \rollToDS\% fewer rollouts; ablations isolate the coordinate, kernel, and pacing (\S\ref{sec:experiments}).
\end{itemize}

\section{Related Work}
\label{sec:related}

\paragraph{Zero-variance prompts and prompt selection.}
DS filters zero-variance groups after generation~\citep{yu2025dapo} and online difficulty filtering keeps an accuracy band~\citep{bae2026online}; both pay for what they discard. GRESO skips recently uninformative prompts~\citep{zheng2025greso}. Rollout-free selectors use Beta posteriors with Thompson sampling~\citep{qu2026mopps}, hidden Markov dynamics~\citep{mao2026dps}, a learned cross-prompt predictor~\citep{qu2026gps}, or a logit-space Kalman filter~\citep{zhu2026kgps}; CurES samples by a softmax over $\sqrt{p(1-p)}$~\citep{zeng2025cures}, VCRL by group variance~\citep{jiang2025vcrl}, and FG-ExPO by a Gaussian at $p{=}0.5$~\citep{lin2026fgexpo}. Other work revives zero-variance groups~\citep{le2026noprompt,liu2025erpo,mao2026popo,baroian2026replay}, allocates rollout budgets~\citep{zou2026trace}, builds cold-start priors~\citep{sha2026thinkprior}, or balances explore--exploit portfolios~\citep{liang2026leeps}. \method{} keeps the rollout-free setting but derives coordinate, width, and target from GRPO's geometry.

\paragraph{Curricula and pacing.}
Automatic curricula favor intermediate difficulty or learning progress~\citep{bengio2009curriculum,graves2017automated,florensa2018goal,jiang2021plr,portelas2020survey}. For LLM reasoning, AdaRFT moves a target difficulty by reward feedback~\citep{shi2026adarft}, SEC runs a bandit over difficulty levels~\citep{chen2025sec}, learnability sampling targets $p(1{-}p)$~\citep{foster2025learnability}, and PCL finds prompts near $p{=}0.5$ with a value model~\citep{gao2025pcl}. \method{} instead paces an \emph{objective} along the pass$^k$--pass@$k$ family, limited by a closed-form prediction of waste.

\paragraph{What RLVR optimizes.}
Common RLVR algorithms ascend monotone transforms of the pass rate, $\arcsin\sqrt{p}$ for GRPO~\citep{davis2025objective}, and GRPO's normalization amplifies rare successes~\citep{mroueh2025grpo}; advantage reshaping can target pass@$k$~\citep{walder2025pkpo,chen2025passk,tang2025inference}. We show that the \emph{sampler} reshapes the objective too, without touching advantages and while saving the rollouts of zero-variance groups. Extended related work is in Appendix~\ref{app:related}.

\section{The Geometry of Pass Rates under GRPO}
\label{sec:geometry}

\paragraph{Setup.}
A prompt $q$ has a binary verifier reward and pass rate $p_q(\theta)=\Pr_{o\sim\pi_\theta(\cdot\mid q)}[r(q,o){=}1]$. GRPO draws $o_1,\dots,o_G\sim\pi_\theta(\cdot\mid q)$, forms $\hat A_{q,i}=(r_{q,i}-\bar r_q)/(s_q+\epsilon)$ from the group mean $\bar r_q$ and standard deviation $s_q$, and uses
\begin{equation}
\hat g_q=\frac{1}{G}\sum_{i=1}^{G}\hat A_{q,i}\,\nabla_\theta\log\pi_\theta(o_i\mid q)
\label{eq:grpo}
\end{equation}
inside a clipped surrogate~\citep{shao2024deepseekmath,schulman2017ppo}. A group with $m$ successes is \emph{informative} iff $0<m<G$; otherwise all advantages vanish. This happens with probability $1-z_G$, where $z_G(p)=p^G+(1-p)^G$ and $1-z_G=\mathrm{pass@}G-\mathrm{pass}^G$: a group teaches only if one of its $G$ attempts succeeds and one fails. A \emph{sampler} includes prompt $q$ in an update with probability $w_q$.

\paragraph{Arc length.}
Let $\psi=\arcsin\sqrt{p}\in[0,\hpi]$. The Fisher information of $\mathrm{Bernoulli}(\sin^2\psi)$ about $\psi$ is $4$ for every $\psi$, so the Fisher--Rao distance between pass rates is $2|\psi-\psi'|$ and the Jeffreys prior is uniform in $\psi$~\citep{rao1945information,jeffreys1946invariant,amari2016information}: $(\sqrt{p},\sqrt{1-p})$ traces a quarter circle and $\psi$ is its arc length. Proofs are in Appendix~\ref{app:proofs}.

\begin{proposition}[GRPO is uniform in arc length]
\label{prop:uniform}
For binary rewards and population-statistics normalization ($\epsilon\to0$),
$\E[\hat g_q]=2\,\omega_G(p_q)\,\nabla_\theta\psi_q$ with
$\omega_G(p)=\E_{m\sim\Bin(G,p)}\big[\sqrt{m(G-m)}\big]\big/\big(G\sqrt{p(1-p)}\big)$.
Moreover (i) $0\le\omega_G\le\sqrt{1-1/G}$; (ii) $\omega_G\to1$ uniformly on every compact subset of $(0,1)$ as $G\to\infty$; and (iii) $\omega_G(\psi)\le\sqrt{G-1}\,\min(\tan\psi,\cot\psi)$, with equality in the limits $\psi\to0$ and $\psi\to\hpi$.
\end{proposition}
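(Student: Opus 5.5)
The plan is a direct computation of $\E[\hat g_q]$ by conditioning on the reward pattern of the group, followed by elementary inequalities for the binomial weight. With population statistics and $\epsilon\to0$, a group with $m$ successes has $\bar r_q=m/G$ and $s_q=\sqrt{m(G-m)}/G$. A success therefore gets advantage $(G-m)/\sqrt{m(G-m)}$ and a failure gets $-m/\sqrt{m(G-m)}$. Non-informative groups ($m\in\{0,G\}$) contribute zero.

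\textbf{Step 1: the key identity.} Since $p=\E_o[\mathbf 1\{r=1\}]$, the score identity gives $\E[\mathbf 1\{r(q,o)=1\}\nabla_\theta\log\pi_\theta(o\mid q)]=\nabla_\theta p$. Hence
\[
\E[\nabla\log\pi\mid r=1]=\nabla p/p,\qquad \E[\nabla\log\pi\mid r=0]=-\nabla p/(1-p).
\]
Because the $o_i$ are i.i.d., conditioning on the whole reward vector leaves each $o_i$ distributed as $\pi_\theta$ conditioned on its own reward. The conditional mean of $\hat g_q$ given $m$ is then
\[
\frac1G\cdot\frac{m(G-m)}{\sqrt{m(G-m)}}\Big(\frac{\nabla p}{p}+\frac{\nabla p}{1-p}\Big)=\frac{\sqrt{m(G-m)}}{G\,p(1-p)}\nabla p .
\]
Averaging over $m\sim\Bin(G,p)$ and using $\nabla\psi=\nabla p/(2\sqrt{p(1-p)})$ yields $2\omega_G(p)\nabla\psi$.

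\textbf{Step 2: parts (i) and (ii).} Part (i) follows from Jensen's inequality: $\E\sqrt{m(G-m)}\le\sqrt{\E[m(G-m)]}=\sqrt{G(G-1)p(1-p)}$, and nonnegativity is clear. For (ii), write $\omega_G=\E\big[\sqrt{\hat p(1-\hat p)}\big]/\sqrt{p(1-p)}$ with $\hat p=m/G$. The function $x\mapsto\sqrt{x(1-x)}$ is bounded by $1/2$ and uniformly continuous on $[0,1]$. Chebyshev gives $\Pr(|\hat p-p|>\delta)\le 1/(4G\delta^2)$, uniformly in $p$. Together these make the numerator converge to $\sqrt{p(1-p)}$ uniformly in $p$. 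On a compact subset of $(0,1)$ the denominator is bounded away from $0$, so the ratio converges uniformly to $1$.

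\textbf{Step 3: part (iii), the main obstacle.} The difficulty is finding a pointwise bound that gives exactly $\tan\psi=\sqrt{p/(1-p)}$. Naive Jensen on $\sqrt m$ only yields $\sqrt{(G-1)/G}/\sqrt{1-p}$, which is the wrong shape. Instead I would use the inequality $\sqrt{m(G-m)}\le\sqrt{G-1}\,m$ for $0\le m\le G$. It is trivial at $m=0$, and for $m\ge1$ it reduces to $G-m\le m(G-1)$, which holds. This gives $\E\sqrt{m(G-m)}\le\sqrt{G-1}\,Gp$, hence $\omega_G\le\sqrt{G-1}\sqrt{p/(1-p)}=\sqrt{G-1}\tan\psi$. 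The symmetric inequality $\sqrt{m(G-m)}\le\sqrt{G-1}(G-m)$ gives the $\cot\psi$ bound, and taking the minimum finishes the inequality.

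\textbf{Step 4: asymptotic equality.} As $p\to0$, the $m=1$ term $Gp(1-p)^{G-1}\sqrt{G-1}$ dominates $\E\sqrt{m(G-m)}$, since the remaining terms are $O(p^2)$. Dividing by $\sqrt{G-1}\,Gp$ therefore gives ratio $\to1$. Thus $\omega_G/(\sqrt{G-1}\tan\psi)\to1$ as $\psi\to0$, and by the symmetry $p\leftrightarrow1-p$ the same holds for $\cot\psi$ as $\psi\to\hpi$.
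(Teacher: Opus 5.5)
Your proposal is correct and follows the paper's proof essentially step for step. It uses the same conditional-score identities and advantages for the main identity, Jensen for (i), uniform continuity plus Chebyshev for (ii), and for (iii) the pointwise bound $\sqrt{m(G-m)}\le\sqrt{G-1}\,m$ together with domination of the $m=1$ term as $p\to0$.
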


\begin{corollary}[Every sampler is an objective]
\label{cor:objective}
If $w_q=w(\psi_q)$ is held fixed within an update, then $\sum_q w_q\E[\hat g_q]=\nabla_\theta\sum_q F_w(\psi_q)$ with $F_w(\psi)=2\int_0^{\psi}w(u)\,\omega_G(u)\,du$. Uniform sampling recovers the arcsine objective of~\citet{davis2025objective}; post-hoc filtering of zero-variance groups (DS) leaves $F_w$ unchanged and only rescales it; and a Gaussian $w=\N(\pst,\sigma^2)$ in arc length yields $F_w\approx\Phi\big((\psi-\pst)/\sigma\big)$, a smoothed count of prompts whose pass rate exceeds $\sin^2\pst$.
\end{corollary}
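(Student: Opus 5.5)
The plan is to reduce everything to Proposition~\ref{prop:uniform} and the chain rule. Write $\omega_G(u)$ for $\omega_G(\sin^2 u)$. Because $w_q=w(\psi_q)$ is held fixed within the update, it is a constant with respect to $\theta$. Linearity of expectation together with Proposition~\ref{prop:uniform} then gives
\[
\sum_q w_q\,\E[\hat g_q]=\sum_q 2\,w(\psi_q)\,\omega_G(\psi_q)\,\nabla_\theta\psi_q .
\]
Define $F_w(\psi)=2\int_0^{\psi}w(u)\,\omega_G(u)\,du$. By the fundamental theorem of calculus, $F_w'(\psi)=2w(\psi)\,\omega_G(\psi)$, so the chain rule gives $\nabla_\theta F_w(\psi_q)=2w(\psi_q)\,\omega_G(\psi_q)\,\nabla_\theta\psi_q$. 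Summing over $q$ yields the identity.

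\textbf{Regularity.} We need $w\,\omega_G$ to be locally integrable on $[0,\hpi]$ so that $F_w$ is $C^1$ wherever $w$ is continuous. Part (i) of Proposition~\ref{prop:uniform} gives $0\le\omega_G\le\sqrt{1-1/G}$, and $\omega_G$ is continuous on $(0,\hpi)$. So it suffices to assume $w$ is bounded and continuous, which covers all three special cases. The key point is that gradients are never taken through $w$; that is exactly the role of the ``held fixed'' hypothesis.

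\textbf{The three specializations.}
\begin{enumerate}
\item \emph{Uniform sampling.} Take $w\equiv 1$, so $F_w=2\int_0^{\psi}\omega_G$. Proposition~\ref{prop:uniform}(ii) gives $\omega_G\to1$, so $F_w\to2\psi=2\arcsin\sqrt p$, the objective of \citet{davis2025objective}. At finite $G$ it is this objective up to the two boundary ramps bounded in (iii).
\item \emph{Dynamic sampling.} Model DS as computing the same $\hat g_q$ but discarding groups with $m\in\{0,G\}$. Those groups already contribute $\hat g_q=0$, because all their advantages vanish. Hence conditioning on being informative only renormalizes the batch: each surviving term is divided by the realized batch size, or equivalently the expectation is rescaled by a common factor. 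The direction $\nabla\sum_q F_w$ is therefore unchanged and only its scale moves. This is the one step where I must fix a precise model of DS: I will take the batch normalization constant to be common across prompts, so the rescaling is a global scalar.
\item \emph{Gaussian sampler.} Take $w(u)=\N(u;\pst,\sigma^2)$. On the interior, where $\omega_G\approx1$,
\[
F_w(\psi)\approx 2\bigl[\Phi((\psi-\pst)/\sigma)-\Phi(-\pst/\sigma)\bigr].
\]
The constant term does not affect the gradient, and up to the factor $2$ this is $\Phi((\psi-\pst)/\sigma)$. Since $\Phi$ is a smoothed step at $\psi=\pst$, i.e.\ at $p=\sin^2\pst$, the sum $\sum_q F_w$ is a smoothed count of prompts with $p_q>\sin^2\pst$.
\end{enumerate}

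\textbf{Main obstacle.} The hard part is quantifying ``$\approx$'' in the Gaussian case, because $\omega_G$ deviates from $1$ only inside the ramps of width $O(1/\sqrt G)$ near the endpoints. I would bound the error by
\[
2\int |w|\,|1-\omega_G| \;\le\; 2\sup w\cdot\int_0^{\hpi}|1-\omega_G|,
\]
and control the last integral using the tan/cot bound in (iii) together with the uniform convergence in (ii). Alternatively, the error is small whenever $\pst$ lies at distance $\gg\sigma$ and $\gg1/\sqrt G$ from both endpoints, since then the Gaussian mass inside the ramps is exponentially small.
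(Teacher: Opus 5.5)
Your derivation of the identity, and your uniform and Gaussian cases, are the paper's argument: Proposition~\ref{prop:uniform} plus the chain rule with $w$ not differentiated, then $\omega_G\approx1$ away from the ramps. Your regularity remark and the error discussion for the Gaussian case go beyond what the paper does and are harmless.

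The gap is in the DS case, the one place where the corollary needs more than Proposition~\ref{prop:uniform}. You have the right key observation: discarded groups already contribute $\hat g_q=0$. But DS does not merely filter a fixed batch. It keeps drawing prompts until $B$ informative groups are collected, so a random, outcome-dependent quantity necessarily enters. That quantity is the number of draws $\tau$ if you normalize by $1/B$, or the survivor count $K$ if you divide by the realized batch size.

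Your assumption that the normalizer is ``common across prompts'' does not control this. Being the same for every prompt in a batch does not make it independent of which groups turned out informative. The worry is real: conditioning on informativeness does reweight the prompt distribution, by $1-z_G(p_q)$. That would change $F_w$ unless you show it cancels. If instead you use a deterministic normalizer and no refill, the claim becomes vacuous, because the discarded terms are already zero, but then the model is not DS.

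The paper closes this with Wald's identity. Each draw's kept contribution is $\hat g_q\mathbb{1}\{0<m<G\}=\hat g_q$, and $\tau$ is a stopping time for the i.i.d.\ draws. Hence $\E\big[\sum_{i\le\tau}\hat g_{q_i}\big]=\E[\tau]\cdot\frac1N\sum_q\E[\hat g_q]$, which is the uniform direction scaled by $\E[\tau]/B$.

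An equivalent repair in your conditioning language goes as follows. The $B$ kept groups are i.i.d.\ from the law of a uniform draw conditioned on being informative. The identity $\E[\hat g_q\mid 0<m_q<G]=\E[\hat g_q]/(1-z_G(p_q))$ exactly cancels the reweighting. This gives $\E[\hat g\mid\text{informative}]=\frac{1}{N\bar y}\sum_q\E[\hat g_q]$ with $\bar y=\frac1N\sum_q(1-z_G(p_q))$, and $B/\bar y=\E[\tau]$ recovers the paper's factor.

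For a fixed batch normalized by $1/K$, the analogous fact is that $K=1+K'$ on $\{\hat g_i\ne0\}$, with $K'$ independent of draw $i$. So $\E[\hat g_i/K]=\E[\hat g_i]\,\E[1/(1+K')]$. You need one of these arguments; as written, ``only rescales'' is asserted rather than proved.
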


\begin{proposition}[Zero-variance groups are Gaussian boundary layers]
\label{prop:deadzone}
For all $\psi\in[0,\hpi]$, $z_G(\psi)=\cos^{2G}\psi+\sin^{2G}\psi\le e^{-G\psi^2}+e^{-G(\pi/2-\psi)^2}$. If $\psi\sim\N(\mu,P)$, then
$\E\big[e^{-G\psi^2}\big]=(1+2GP)^{-1/2}\exp\!\big(-G\mu^2/(1+2GP)\big)$, and symmetrically at $\hpi$.
\end{proposition}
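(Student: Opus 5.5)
The proof has two independent parts: a pointwise inequality for $z_G$, and a Gaussian integral. Since $p=\sin^2\psi$, we have $z_G(\psi)=(1-p)^G+p^G=\cos^{2G}\psi+\sin^{2G}\psi$ by definition. The inequality then follows termwise from the single claim
\begin{equation*}
\cos^2 u\le e^{-u^2}\qquad\text{for } u\in[0,\hpi].
\end{equation*}
Raising this to the power $G$ gives $\cos^{2G}\psi\le e^{-G\psi^2}$. Applying it at $u=\hpi-\psi$ and using $\sin\psi=\cos(\hpi-\psi)$ gives $\sin^{2G}\psi\le e^{-G(\pi/2-\psi)^2}$, and adding the two bounds finishes the first claim.

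For the scalar inequality, equivalently $2\log\cos u+u^2\le 0$ on $[0,\hpi)$ (at $u=\hpi$ the left side of the original is $0$, so it holds trivially), set $h(u)=2\log\cos u+u^2$. Then $h(0)=0$ and
\begin{equation*}
h'(u)=-2\tan u+2u\le 0,
\end{equation*}
since $\tan u\ge u$ on $[0,\hpi)$. Alternatively one can use $\cos u\le e^{-u^2/2}$, which follows by comparing Taylor series, or by noting that $\log\cos$ has second derivative $-\sec^2 u\le -1$, so $\log\cos u\le -u^2/2$ by integrating twice from $0$. Either route is routine; I expect this step to be the main point needing care, and it is short.

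For the Gaussian expectation, write $\E[e^{-G\psi^2}]=\int (2\pi P)^{-1/2}\exp\!\big(-G x^2-(x-\mu)^2/(2P)\big)\,dx$. The exponent is quadratic in $x$ with coefficient $-(1+2GP)/(2P)$ on $x^2$. Completing the square, the integral equals $(1+2GP)^{-1/2}$ times $\exp$ of the minimized residual. That residual is $-\mu^2/(2P)+\mu^2/\big(2P(1+2GP)\big)=-G\mu^2/(1+2GP)$. (Equivalently, this is the moment generating function of the noncentral $\chi^2$ of $\psi^2$ evaluated at $-G$.)

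The statement at $\hpi$ follows by applying the same computation to $\hpi-\psi\sim\N(\hpi-\mu,P)$. Note that the Gaussian is taken over all of $\mathbb{R}$ as stated, not truncated to $[0,\hpi]$; this needs no further justification, because the formula is an identity for the untruncated Gaussian.
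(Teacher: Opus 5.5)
Your proposal is correct and follows the paper's proof essentially step for step. The paper uses $f(x)=\log\cos x+x^2/2$ with $f(0)=0$ and $f'(x)=x-\tan x\le 0$ (your $h$ is just $2f$), transfers the bound to $\sin^{2G}\psi$ via $\sin\psi=\cos(\hpi-\psi)$, and obtains the Gaussian expectation by completing the square, as you do; your residual $-G\mu^2/(1+2GP)$ and the prefactor $(1+2GP)^{-1/2}$ check out.
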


\begin{proposition}[Evidence is homoscedastic in arc length]
\label{prop:noise}
Let $m\sim\Bin(n,\sin^2\psi)$. (i) For $0<m<n$, the Laplace approximation of the likelihood has mean $\arcsin\sqrt{m/n}$ and variance $1/(4n)$, independent of $m$; the Anscombe estimate $\hat\psi=\arcsin\sqrt{(m+3/8)/(n+3/4)}$ has variance $1/(4n)+O(n^{-2})$ uniformly on compact subsets~\citep{anscombe1948transformation}. (ii) For $m=0$ ($m=n$) the likelihood $\cos^{2n}\psi$ ($\sin^{2n}\psi$) is bounded by a Gaussian of mean $0$ ($\hpi$) and variance $1/(2n)$. (iii) By contrast, the delta-method variance of $\mathrm{logit}(\hat p)$, $1/(np(1-p))$, is unbounded as $p\to\{0,1\}$.
\end{proposition}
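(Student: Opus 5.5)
The plan is to treat the three parts separately, working throughout with the log-likelihood in arc length. Writing $p=\sin^2\psi$, the binomial log-likelihood is
\[
\ell(\psi)=2m\log\sin\psi+2(n-m)\log\cos\psi+\text{const}.
\]

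\emph{Part (i).} First I find the mode. The derivative is $\ell'(\psi)=2m\cot\psi-2(n-m)\tan\psi$, which vanishes at $\tan^2\psi=m/(n-m)$, that is, at $\hat\psi=\arcsin\sqrt{m/n}$. This mode is interior exactly when $0<m<n$.

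Next I compute the curvature:
\[
-\ell''(\psi)=2m\csc^2\psi+2(n-m)\sec^2\psi .
\]
At the mode, $\csc^2\hat\psi=n/m$ and $\sec^2\hat\psi=n/(n-m)$, so $-\ell''(\hat\psi)=2n+2n=4n$. The Laplace variance is therefore $1/(4n)$, independent of $m$. This is just the observed-information version of the constant Fisher information $4$ per trial noted in the Arc length paragraph.

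For the Anscombe statement I will run a second-order delta method on $h(x)=\arcsin\sqrt{(x+c)/(n+2c)}$ with $c=3/8$. Its derivative is $h'(x)=1/\bigl(2\sqrt{(x+c)(n+c'-x)}\bigr)$, where $c'=n+2c-n-c=c$ (I will simply carry the constants). Then $\mathrm{Var}(\hat\psi)=h'(np)^2\,np(1-p)+O(n^{-2})=1/(4n)+O(n^{-2})$. The remainder is controlled by the Taylor remainder together with binomial central moments $\E|m-np|^k=O(n^{k/2})$, uniformly for $p$ in a compact subset of $(0,1)$. Since the precise second-order constant is Anscombe's classical result, I will cite it rather than rederive it.

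\emph{Part (ii).} For $m=0$ the likelihood is $\cos^{2n}\psi$. By Proposition~\ref{prop:deadzone} (with $G$ replaced by $n$), $\cos^{2n}\psi\le e^{-n\psi^2}$. This is an unnormalized Gaussian with mean $0$ and variance $1/(2n)$. The underlying inequality is $\log\cos\psi\le-\psi^2/2$, which follows from $(\log\cos)''=-\sec^2\le-1$ with value and slope $0$ at $\psi=0$. The case $m=n$ follows by the symmetry $\psi\mapsto\hpi-\psi$.

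\emph{Part (iii).} Set $\hat p=m/n$. Then $\mathrm{logit}'(p)=1/(p(1-p))$ and $\mathrm{Var}(\hat p)=p(1-p)/n$, so the delta-method variance is $1/(np(1-p))$, which diverges as $p\to0$ or $p\to1$.

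\emph{Main obstacle.} The only delicate step is the uniform $O(n^{-2})$ Anscombe remainder. I plan to handle it by restricting to compact subsets, as the statement allows, and citing \citet{anscombe1948transformation} for the cancellation of the $O(n^{-1})$ bias-variance correction term at $c=3/8$. Everything else is explicit calculus.
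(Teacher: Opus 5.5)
Your proposal is correct and follows the paper's proof essentially step for step: the same arc-length log-likelihood with curvature $-\ell''=4n$ at the mode $\arcsin\sqrt{m/n}$, a first-order delta method for the Anscombe estimate (you expand in $m$ with the shift inside $h$, while the paper expands $\arcsin\sqrt{\cdot}$ in $\tilde p$, which is the same computation), the bound $\log\cos x\le -x^2/2$ for the boundary layers, and the delta method for the logit.

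One clarification on your ``main obstacle'': your own first-order computation already gives the stated $1/(4n)+O(n^{-2})$ for any fixed shift $c$, once you localize to $|m-np|\le\delta n$ and control the complement by Hoeffding using that $h$ is bounded, so you do not need to cite Anscombe for this claim. The value $c=3/8$ matters only one order later, where it makes the $n^{-2}$ term independent of $p$ and yields the refinement $1/(4n+2)$ used in Eq.~\eqref{eq:obs}.
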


\begin{proposition}[Objectives are Gaussians in arc length]
\label{prop:kernel}
For real $k\ge1$, $H_k(\psi)=\tfrac{d}{d\psi}\big[1-(1-p)^k\big]=2k\sin\psi\cos^{2k-1}\psi$ is a log-concave probability density on $[0,\hpi]$ with mode $\pst_k=\arcsin(1/\sqrt{2k})$, i.e.\ $p^\star=1/(2k)$, and $-(\log H_k)''(\pst_k)=4k$. Its Laplace approximation is $\N(\pst_k,1/(4k))$; pass$^k$ is the mirror image with mode $\arccos(1/\sqrt{2k})$. Indexing the family by its mode gives the matched width $\sigma(\pst)=\min(\sin\pst,\cos\pst)/\sqrt2$.
\end{proposition}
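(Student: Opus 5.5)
The plan is to verify each claim of Prop.~\ref{prop:kernel} by direct calculus in the substitution $p=\sin^2\psi$, so that $1-p=\cos^2\psi$ and $dp/d\psi=2\sin\psi\cos\psi$.

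\textbf{Density and log-concavity.} By the chain rule,
\[
\tfrac{d}{d\psi}\big[1-\cos^{2k}\psi\big]=2k\cos^{2k-1}\psi\sin\psi=H_k(\psi),
\]
which is nonnegative on $[0,\hpi]$. Its integral over $[0,\hpi]$ is $[1-\cos^{2k}\psi]_0^{\pi/2}=1$ for real $k\ge1$. So $H_k$ is a probability density. Log-concavity comes from differentiating
\[
\log H_k=\log 2k+\log\sin\psi+(2k-1)\log\cos\psi
\]
twice, which gives
\[
-(\log H_k)''=\csc^2\psi+(2k-1)\sec^2\psi>0 .
\]

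\textbf{Mode, curvature and Laplace approximation.} The first derivative is $(\log H_k)'=\cot\psi-(2k-1)\tan\psi$. It vanishes iff $\tan^2\psi=1/(2k-1)$, i.e.\ $\sin^2\psi=1/(2k)$. This gives $\pst_k=\arcsin(1/\sqrt{2k})$ and $p^\star=1/(2k)$, and strict concavity makes the mode unique. At this point $\csc^2\pst_k=2k$ and $\sec^2\pst_k=2k/(2k-1)$. Hence the curvature is
\[
2k+(2k-1)\cdot\frac{2k}{2k-1}=4k,
\]
and the Laplace approximation is $\N(\pst_k,1/(4k))$.

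\textbf{Mirror image for pass$^k$.} The objective is $p^k=\sin^{2k}\psi$. Its derivative equals $H_k(\hpi-\psi)$ under the reflection $\psi\mapsto\hpi-\psi$, which swaps $\sin$ and $\cos$. So its mode is $\hpi-\pst_k=\arccos(1/\sqrt{2k})$, and it has the same width.

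\textbf{Matched width.} Only this step needs care: inverting the mode--width relation and handling the two branches. For pass@$k$ ($\pst\le\pi/4$), $\sin^2\pst=1/(2k)$ gives
\[
\sigma=\frac{1}{2\sqrt k}=\frac{\sin\pst}{\sqrt2}.
\]
For pass$^k$ ($\pst\ge\pi/4$), the mirrored identity gives $\sigma=\cos\pst/\sqrt2$. On $[0,\pi/4]$ we have $\sin\le\cos$, and the reverse holds on $[\pi/4,\hpi]$. So both branches combine into $\sigma(\pst)=\min(\sin\pst,\cos\pst)/\sqrt2$.

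The two branches meet continuously at $k=1$: there $\pst=\pi/4$, pass@$1$ equals pass$^1$, and $\sigma=1/2$. Real $k\ge1$ is exactly what makes the family cover $\pst\in(0,\hpi)$ continuously.
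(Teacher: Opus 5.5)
Your proposal is correct and follows the paper's proof essentially step by step. It uses the chain-rule computation and normalization of $H_k$, the second log-derivative $-\csc^2\psi-(2k-1)\sec^2\psi<0$, and the mode from $\tan^2\psi=1/(2k-1)$ with curvature $2k+2k=4k$. It then uses the reflection $\psi\mapsto\hpi-\psi$ for pass$^k$ and the two-branch inversion $1/(2\sqrt k)=\sin\pst/\sqrt2$ to get $\sigma(\pst)=\min(\sin\pst,\cos\pst)/\sqrt2$. The paper additionally notes that $H_k$ is the density of $\arcsin\sqrt{X}$ for $X\sim\mathrm{Beta}(1,k)$, but the statement does not need this.
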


Away from two ramps of width about $1/\sqrt{G}$, GRPO extracts the same arc-length signal from every prompt (Proposition~\ref{prop:uniform}), so \emph{where} to sample is a question about the objective and about waste. Proposition~\ref{prop:deadzone} turns waste into a Gaussian that integrates against any Gaussian belief, and Proposition~\ref{prop:noise} makes a linear--Gaussian filter the right belief model. Proposition~\ref{prop:kernel} fixes the kernel: its curvature $4k$ equals the Fisher information of $k$ Bernoulli draws, so an objective that counts $k$ attempts resolves pass rates exactly as $k$ samples do. The Laplace kernels match the exact ones within $0.05$--$0.07$ total variation for $k\in[1,16]$ (Appendix~\ref{app:numerics}).

\section{\method{}}
\label{sec:method}

\begin{figure*}[t]
\centering
\includegraphics[width=\textwidth]{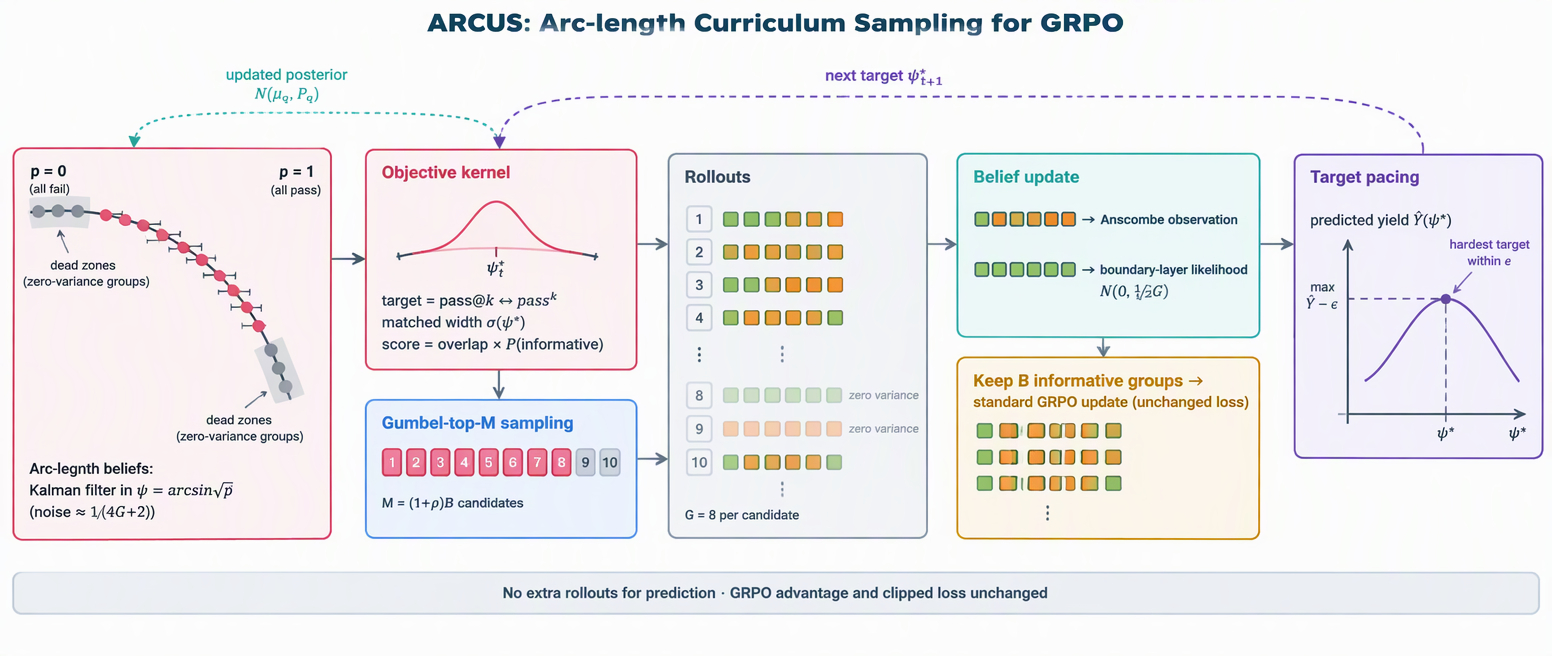}
\caption{\textbf{\method{} within one GRPO step.} Beliefs live on the arc $\psi=\arcsin\sqrt{p}$. An objective-matched Gaussian kernel at the paced target $\pst_t$ scores prompts; Gumbel-top-$M$ candidates are rolled out and update the beliefs; only informative groups enter the unchanged GRPO update; pacing moves $\pst_t$ to the hardest objective whose predicted yield stays within $\varepsilon$ of the best.}
\label{fig:overview}
\end{figure*}

\method{} replaces the prompt-sampling stage of GRPO (Figure~\ref{fig:overview}, Algorithm~\ref{alg:arcus}); the advantage in Eq.~\eqref{eq:grpo}, the clipped surrogate, and the optimizer are unchanged.

\paragraph{Arc-length beliefs.}
Each prompt carries a belief $\psi_q\sim\N(\mu_q,P_q)$, initialized at the Jeffreys moments $\N(\pi/4,\pi^2/48)$ and, after the first hundred observations, at the empirical moments of observed prompts. Before each update, every belief is propagated as
\begin{equation}
\mu_q\leftarrow\Pi\big[\mu_q+v_t\sin2\mu_q\big],\qquad P_q\leftarrow P_q+Q_t,
\label{eq:predict}
\end{equation}
where $\Pi$ projects onto $[0,\hpi]$. The drift $v_t$ models the improving policy: a shared competence gain $\delta x$ in logit space moves $\psi$ by $\tfrac{\delta x}{4}\sin2\psi$, so mobility vanishes at both ends (Appendix~\ref{app:drift}); the diffusion $Q_t$ absorbs prompt-specific change. Both are fitted online by moment matching on the innovations of revisited prompts~\citep{mehra1970identification}. An observed group with $m$ successes triggers a scalar Kalman update~\citep{kalman1960new}, $K=P_q/(P_q{+}R)$, $\mu_q\leftarrow\Pi[\mu_q+K(z-\mu_q)]$, $P_q\leftarrow(1-K)P_q$, whose observation follows Proposition~\ref{prop:noise}:
\begin{equation}
(z,R)=\begin{cases}
\big(\hat\psi_{\mathrm{A}}(m),\ \tfrac{1}{4G+2}\big) & 0<m<G,\\
\big(0,\ \tfrac{1}{2G}\big) & m=0,\\
\big(\hpi,\ \tfrac{1}{2G}\big) & m=G,
\end{cases}
\label{eq:obs}
\end{equation}
with $\hat\psi_{\mathrm{A}}$ the Anscombe estimate. A zero-variance group is thus not wasted evidence: its boundary-layer likelihood pulls the belief into the corresponding dead zone.

\paragraph{Objective-matched scoring.}
For a target $\pst$, the kernel is $\N(\pst,\sigma^2)$ with $\sigma=\sigma(\pst)$ from Proposition~\ref{prop:kernel}. Its overlap with the belief and the probability of an informative group (Proposition~\ref{prop:deadzone}) are both closed form; with $a_q=1+2GP_q$,
\begin{align}
\kappa_q(\pst)&=\sqrt{\tfrac{\sigma^2}{\sigma^2+P_q}}\;e^{-\frac{(\mu_q-\pst)^2}{2(\sigma^2+P_q)}},\label{eq:kappa}\\
\yhat_q&=1-\frac{e^{-G\mu_q^2/a_q}+e^{-G(\hpi-\mu_q)^2/a_q}}{\sqrt{a_q}}.\nonumber
\end{align}
The score $s_q(\pst)=\kappa_q(\pst)\,\yhat_q$ weighs the objective relevance of $q$ by the chance that its rollouts produce any gradient (Appendix~\ref{app:score}). Uncertainty adds $P_q$ to the kernel variance, which widens the window for rarely seen prompts and revisits them. \method{} draws $M=\lceil(1+\rho)B\rceil$ candidates by Gumbel-top-$M$ on $\log s_q/T$~\citep{kool2019gumbel}, i.e.\ without replacement with probability $\propto s_q^{1/T}$.

\paragraph{Frontier pacing.}
Harder targets push coverage but approach the lower dead zone, and Proposition~\ref{prop:deadzone} predicts this cost before it is paid. For each target $\psi$ on a grid $\Psi$ from pass@$G$ to pass$^G$ (the objectives a group of $G$ can express), we predict the candidate yield $\Yhat_t(\psi)=\tfrac1M\sum_q\pi_q(\psi)\yhat_q$, with capped-proportional inclusion probabilities $\pi_q=\min(1,c\,s_q(\psi)^{1/T})$, $\sum_q\pi_q=M$. The target is the hardest admissible one,
\begin{equation}
\tilde\psi_t=\min\big\{\psi\in\Psi:\Yhat_t(\psi)\ge\max\nolimits_{\psi'}\Yhat_t(\psi')-\varepsilon\big\},
\label{eq:pace}
\end{equation}
rate-limited by $|\pst_t-\pst_{t-1}|\le\Delta$ and held at pass@1 ($\pst{=}\pi/4$) during a warm-up pass over the pool. With sharp beliefs, Eq.~\eqref{eq:pace} settles where $z_G(\psi)=2^{1-G}+\varepsilon$, i.e.\ $\psi\approx\sqrt{\ln(1/(\varepsilon+2^{1-G}))/G}$, so the reachable objective hardens as $G$ grows, a testable prediction (\S\ref{sec:mechanism}). By Corollary~\ref{cor:objective}, pacing is a homotopy through objectives: GRPO ascends a smoothed count of prompts with $p_q>\sin^2\pst_t$, whose threshold hardens while waste stays bounded.

\paragraph{Post-rollout selection.}
All $M$ candidates update their beliefs; the informative groups, at most $B$ of them ranked by updated score, enter the GRPO update. Dropping zero-variance groups leaves the expected update unchanged (Corollary~\ref{cor:objective}) and removes their backward cost.

\begin{algorithm}[t]
\footnotesize
\DontPrintSemicolon
\SetAlgoLined
\caption{\method{}: one GRPO step}
\label{alg:arcus}
\KwInput{beliefs $\{(\mu_q,P_q)\}$; target $\pst_{t-1}$; batch $B$, group $G$; $\rho,\varepsilon,T,\Delta$}
Predict all beliefs with Eq.~\eqref{eq:predict}\;
\eIf{warm-up}{$\pst_t\leftarrow\pi/4$}{$\pst_t\leftarrow$ Eq.~\eqref{eq:pace}, clipped to $\pst_{t-1}\pm\Delta$}
$s_q\leftarrow\kappa_q(\pst_t)\,\yhat_q$ \tcp*{Eq.~\eqref{eq:kappa}}
$\mathcal{C}\leftarrow$ Gumbel-top-$M$ of $\log s_q/T$, $M=\lceil(1+\rho)B\rceil$\;
Generate $G$ rollouts for each $q\in\mathcal{C}$; record $m_q$\;
Kalman-update $\{(\mu_q,P_q)\}_{q\in\mathcal{C}}$ with Eq.~\eqref{eq:obs}; update $v_t,Q_t$\;
$\mathcal{B}\leftarrow$ informative groups in $\mathcal{C}$, top-$B$ by $s_q(\pst_t)$\;
GRPO update on $\mathcal{B}$ with Eq.~\eqref{eq:grpo} (unchanged)\;
\end{algorithm}

\paragraph{Cost.}
All quantities are closed-form scalars per prompt; pacing over $|\Psi|{=}41$ targets costs $O(N|\Psi|)$ vector operations, under 0.1\,s per step for $N{\approx}17$k. \method{} adds no forward pass, auxiliary model, or prediction rollout; its only extra rollouts are the $\rho BG$ candidate margin.

\paragraph{Existing selectors as special cases.}
The geometry also locates prior heuristics. DS is objective-neutral (Corollary~\ref{cor:objective}): it buys a cleaner batch in the arcsine direction with extra rollouts. Learnability sampling by $p(1-p)$~\citep{foster2025learnability} equals $\tfrac14H_1(\psi)^2$, a pass@1 kernel narrowed from $\sigma{=}1/2$ to $1/(2\sqrt2)$, and the $\sqrt{p(1-p)}$ score inside CurES's softmax~\citep{zeng2025cures} is $\tfrac12H_1$. A Gaussian over raw $p$~\citep{lin2026fgexpo} matches no member of the objective family: in arc length its width varies as $\sigma_p/\sin2\psi$, so its shape is set by the parameterization rather than by an objective. Logit filters~\citep{zhu2026kgps} are linear--Gaussian in the one coordinate where the noise of zero-variance evidence is unbounded (Proposition~\ref{prop:noise}). Each is \method{} with one of its three choices made differently: a fixed target, a mismatched width, or a mismatched coordinate.

\begin{table*}[t]
\centering
\footnotesize
\setlength{\tabcolsep}{5.2pt}
\renewcommand{\arraystretch}{0.96}
\input{tables/main_results.tex}
\caption{\textbf{Main results} (accuracy, \%, mean of three seeds; AIME mean@32, AMC23 mean@16, others mean@4). Groups: uniform, predict-then-select, evaluate-then-filter, ours. \textbf{Roll.}: generated rollouts relative to GRPO at equal policy updates; $\Delta$: average gain over GRPO. \textbf{Bold}/\underline{underline}: best/second best among trained models.}
\label{tab:main}
\end{table*}

\section{Experiments}
\label{sec:experiments}

\subsection{Setup}
\label{sec:setup}

\paragraph{Models, data, and training.}
We train Qwen2.5-Math-1.5B/7B~\citep{yang2024qwen25math} and Qwen3-4B-Base~\citep{yang2025qwen3} on DAPO-Math-17k~\citep{yu2025dapo} with binary rule-based rewards; since Qwen2.5-Math can respond to spurious signals~\citep{shao2025spurious}, Appendix~\ref{app:extra} adds a non-Qwen backbone, a long-CoT model, and a second pool. All methods share the GRPO loss~\citep{shao2024deepseekmath} in verl~\citep{sheng2025hybridflow} with vLLM~\citep{kwon2023vllm}: $B{=}256$, $G{=}8$, 300 policy updates, learning rate $10^{-6}$, clip $0.2$, no KL. They differ only in which prompts are rolled out and which groups enter the update, and are compared at matched updates. \method{} uses $\rho{=}0.25$, $\varepsilon{=}0.03$, $T{=}0.3$, $\Delta{=}0.005$\,rad, $|\Psi|{=}41$ throughout; \method{}$_{\rho=0}$ matches GRPO's rollout budget. We evaluate on AIME24/25, AMC23, MATH500~\citep{hendrycks2021math,lightman2024verify}, Minerva~\citep{lewkowycz2022minerva}, and OlympiadBench~\citep{he2024olympiadbench} (temperature $0.6$, top-$p$ $0.95$; details in Appendix~\ref{app:setup}).

\paragraph{Baselines.}
Uniform GRPO; evaluate-then-filter DS~\citep{yu2025dapo} and GRESO~\citep{zheng2025greso}, which fill the batch with informative groups at extra rollout cost; and the rollout-free GCS (Gaussian over EMA pass rates, $\mu{=}0.5$, $\sigma{=}0.35$)~\citep{lin2026fgexpo}, AdaRFT~\citep{shi2026adarft}, CurES~\citep{zeng2025cures}, MoPPS~\citep{qu2026mopps}, DPS~\citep{mao2026dps}, and KGPS~\citep{zhu2026kgps}, with recommended settings.

\subsection{Main Results}
\label{sec:main}

\method{} attains the best average on all three backbones (Table~\ref{tab:main}): $37.6$, $46.9$, and $49.2$, i.e.\ $+2.8$ to $+2.9$ over GRPO and $+1.1$ to $+1.2$ over DS, the strongest baseline, at $\times1.25$ GRPO's rollouts against $\times2.4$--$2.9$ for DS. First, the rollout-matched \method{}$_{\rho=0}$ already exceeds every predict-then-select method by \gainZeroMin--\gainZeroMax{} points and DS on every backbone, so the gain does not come from the candidate margin. Second, selectors that model uncertainty and change (DPS, KGPS) beat those that rank point estimates (GCS, AdaRFT); GCS, a Gaussian over raw pass rates, barely helps: a Gaussian works only in the right coordinate and width. Third, gains concentrate on hard benchmarks: on Qwen2.5-Math-7B, $+\aimeGainSeven$ on AIME24 and $+3.9$ on AMC23 but $+\mathGainSeven$ on MATH500, where DS stays best on every backbone, as expected from a target that spends the budget on prompts the policy cannot yet solve reliably.

\begin{table}[t]
\centering
\footnotesize
\setlength{\tabcolsep}{3.2pt}
\input{tables/cost.tex}
\caption{\textbf{Cost} on Qwen2.5-Math-7B (300 updates, 8$\times$H100): rollouts (M), tokens (B), yield (informative share of generated groups), Inf.\ (informative share of the update batch), wall-clock hours, and six-benchmark average.}
\label{tab:cost}
\end{table}

\paragraph{Efficiency.}
\method{} raises the informative share of generated groups from \yieldGRPO\% (GRPO) to \yieldArcus\% (\yieldArcusZero\% for \method{}$_{\rho=0}$), above the best predictive selector (KGPS, \yieldKGPS\%; Table~\ref{tab:cost}); DS gets a clean batch by discarding most of what it generates (\yieldDS\% yield). With a 25\% margin, \method{}'s update batch is \infArcus\% informative while it generates 48\% fewer rollouts and 50\% fewer tokens than DS and finishes in \hoursArcus\,h instead of \hoursDS\,h. Per unit of compute (Figure~\ref{fig:results}b), it reaches DS's final accuracy after 0.45M rollouts, \rollToDS\% fewer than DS, and every $\rho\in\{0,0.1,0.25,0.5\}$ lies above the baselines' cost--accuracy frontier (Figure~\ref{fig:results}d).

\begin{figure*}[t]
\centering
\includegraphics[width=\textwidth]{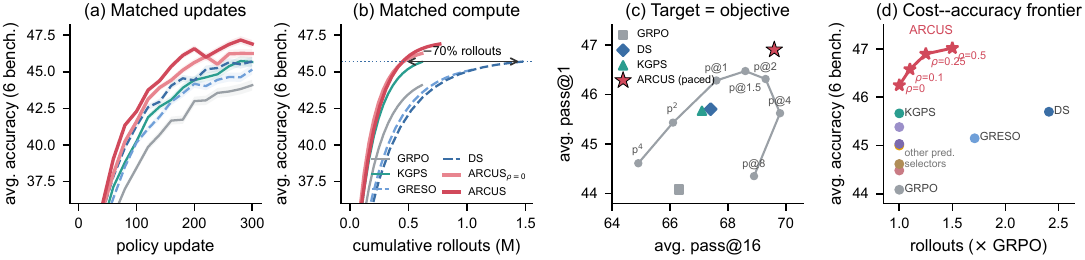}
\caption{\textbf{Accuracy, efficiency, and objective control} (Qwen2.5-Math-7B). (a) Six-benchmark average vs.\ policy updates ($\pm1$ s.d.). (b) The same runs vs.\ cumulative rollouts. (c) Fixed targets along the objective family (gray; p$^k$ = pass$^k$) trade pass@1 against pass@16; the paced target dominates. (d) Cost--accuracy plane; \method{}'s curve varies $\rho$.}
\label{fig:results}
\end{figure*}

\subsection{Ablations}
\label{sec:ablation}

Table~\ref{tab:ablation} removes one ingredient at a time (Qwen2.5-Math-7B). \emph{Geometry matters most}: the same pipeline with a Gaussian over raw pass rates loses $0.69$, and a logit-space Kalman belief loses $0.96$; the latter is over-confident on near-zero-variance prompts (Figure~\ref{fig:geometry}c) and its update batch is 7 points less informative. \emph{The dead-zone factor} $\yhat$ is the key kernel component ($-1.03$); without it hard targets pull candidates into the all-fail layer. A fixed width ($-0.42$) and ignoring belief variance ($-0.35$) matter less. \emph{Pacing beats fixed objectives}: holding pass@1 loses $0.62$, jumping straight to the final target loses $0.76$ because early beliefs cannot support it, and AdaRFT-style reward feedback loses $0.85$. Greedy selection and dropping post-rollout ranking lose $0.53$ and $0.29$. Appendix~\ref{app:extra} adds drift, prior, and observation ablations, the exact kernel ($46.87$ vs.\ $46.90$), and sensitivity to $\varepsilon$, $T$, $\rho$, and $G$.

\begin{table}[t]
\centering
\footnotesize
\setlength{\tabcolsep}{2.8pt}
\input{tables/ablation.tex}
\caption{\textbf{Ablations} on Qwen2.5-Math-7B (six-benchmark average; Inf.: informative share of the update batch).}
\label{tab:ablation}
\end{table}

\subsection{Mechanism Analysis}
\label{sec:mechanism}

\begin{figure*}[t]
\centering
\includegraphics[width=\textwidth]{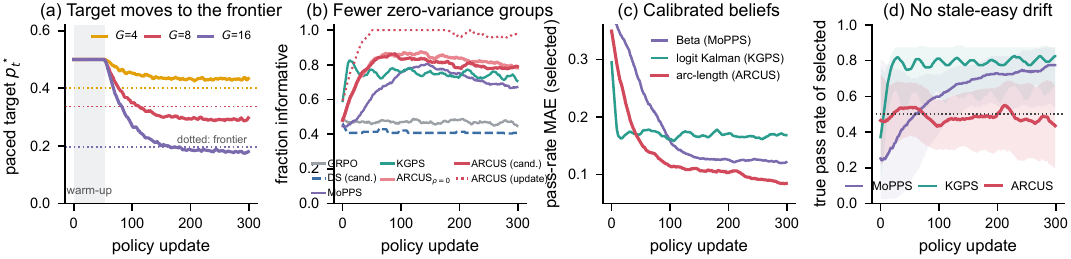}
\caption{\textbf{How \method{} works} (Qwen2.5-Math-7B). (a) Paced target $p^\star_t=\sin^2\pst_t$ for three group sizes (dotted: sharp-belief frontier). (b) Informative share of generated groups (dotted red: \method{}'s update batch). (c) Pass-rate error of the selected prompts. (d) Median and inter-quartile band of the \emph{true} pass rate of selected prompts.}
\label{fig:mechanism}
\end{figure*}

\paragraph{The target finds a $G$-dependent frontier.}
After warm-up the paced target leaves pass@1 and settles at $p^\star\approx0.29$ for $G{=}8$, $0.43$ for $G{=}4$, and $0.18$ for $G{=}16$ (Figure~\ref{fig:mechanism}a), within $0.05$ of the closed-form frontier. Larger groups thin the lower dead zone, so harder objectives become affordable: with $G{=}16$ the run reaches pass@$k$ with $k\approx2.8$ at no extra waste (accuracies in Appendix~\ref{app:extra}). The resulting easy-to-hard curriculum is emergent; nothing is scheduled by hand.

\paragraph{Beliefs stay calibrated and fresh.}
\method{} removes most zero-variance waste after the first pass over the pool (Figure~\ref{fig:mechanism}b). Its pass-rate error on selected prompts falls to $0.09$, versus $0.12$ for discounted Beta posteriors and $0.17$ for a logit Kalman filter (Figure~\ref{fig:mechanism}c), and its normalized innovations stay near one (Figure~\ref{fig:dynamics}c). Figure~\ref{fig:mechanism}d exposes a shared failure of history-based selectors: beliefs go stale in a predictable direction, because a prompt that looked intermediate when last seen is easier now, so MoPPS and KGPS drift toward prompts with true pass rates of $0.75$--$0.8$. \method{}'s drift anticipates this, keeping the selected median near $0.5$ while its lower quartile extends toward harder prompts.

\paragraph{The target is the objective.}
Freezing the target along the objective family trades pass@1 against pass@16 as Corollary~\ref{cor:objective} predicts (Figure~\ref{fig:results}c): pass$^k$ targets sharpen easy prompts and lose coverage, and pass@$k$ targets gain coverage until the dead zone eats the yield (pass@8: $47\%$ yield). The paced target beats every fixed objective on pass@1 and is within $0.2$ of the best on pass@16 ($\passSixteenArcus$ vs.\ $\passSixteenGRPO$ for GRPO), matching its higher entropy (Figure~\ref{fig:dynamics}).

\subsection{Generality}
\label{sec:generality}

The gains are not tied to one backbone, loss, or group size (Appendix~\ref{app:extra}). On the long-CoT DeepSeek-R1-Distill-Qwen-1.5B with an 8k limit, \method{} improves GRPO by $+2.4$ and DS by $+1.1$. On Llama-3.2-3B-Instruct, whose pool is dominated by all-fail groups and where DS needs $\times3.5$ rollouts, the gains are $+1.9$ and $+0.8$. With MATH-train instead of DAPO-Math-17k as the pool, they are $+2.2$ and $+0.9$. Replacing the GRPO loss by the DAPO token-level loss, Dr.~GRPO, or RLOO keeps a margin of $+1.1$ to $+1.2$ over DS, although for the last two the native coordinate is $p$ and the kernel must be reweighted (Appendix~\ref{app:drgrpo}). Across $G\in\{4,8,16\}$ the margin over DS is $+1.2$ to $+1.5$ while DS's overhead falls from $\times3.4$ to $\times2.0$, so the benefit of predicting waste instead of paying for it persists even when waste itself shrinks.

\section{Conclusion}
\label{sec:conclusion}

GRPO has a native geometry for pass rates, the Fisher--Rao arc $\psi=\arcsin\sqrt{p}$, on which its update is uniform, zero-variance groups are Gaussian boundary layers, evidence has constant noise, and pass@$k$ objectives are Gaussians. A prompt curriculum should therefore be a Gaussian in arc length whose center is an objective. \method{} realizes this as a rollout-free sampler with closed-form scoring and objective pacing that leaves GRPO's loss untouched and improves accuracy while generating half the rollouts of dynamic sampling. The geometry applies wherever group-normalized binary rewards appear, including code, agents, and test-time allocation.

\section*{Limitations}

\method{} is derived for binary rewards and the standard-deviation normalization of GRPO. Variants without that normalization, such as Dr.~GRPO, have a different native coordinate (Appendix~\ref{app:drgrpo}), and continuous rewards require binarization or a new variance-stabilizing map. The sampler--objective correspondence treats prompts as having separate gradients and ignores transfer between prompts; transfer is what makes the drift term necessary, and our global drift is a first-order model of it. The pacing rule encodes one preference: the hardest objective within a yield slack. Practitioners who value reliability over coverage may prefer another point on the objective family. Our experiments cover mathematical reasoning with models up to 7B parameters and $G\le16$; code, agentic tasks, and larger scales remain to be tested.

\section*{Ethics Statement}

\method{} changes only which training prompts receive rollouts, so it inherits the risks of the models and data it is applied to. By reducing generated rollouts and tokens, it lowers the energy cost of RLVR post-training. All datasets and benchmarks are public and used under their licenses; no human subjects or personal data are involved. AI assistants were used for language editing and for checking LaTeX; the method, proofs, and experimental design are the authors' own, and every reference was checked against its primary source.

\bibliography{custom}

\appendix
\input{appendix.tex}

\end{document}

%% file: tables/stats.tex
\newcommand{\gainGRPOmin}{2.8}
\newcommand{\gainGRPOmax}{2.9}
\newcommand{\gainDSmin}{1.1}
\newcommand{\gainDSmax}{1.2}
\newcommand{\gainZeroMin}{0.6}
\newcommand{\gainZeroMax}{0.7}
\newcommand{\saveDSmin}{48}
\newcommand{\saveDSmax}{57}

\newcommand{\rollToDS}{70}
\newcommand{\yieldGRPO}{47}
\newcommand{\yieldArcusZero}{81}
\newcommand{\yieldKGPS}{75}
\newcommand{\yieldDS}{41}
\newcommand{\infArcus}{97}
\newcommand{\yieldArcus}{79}
\newcommand{\hoursArcus}{24.1}
\newcommand{\hoursDS}{41.6}

\newcommand{\aimeGainSeven}{4.2}
\newcommand{\mathGainSeven}{1.4}
\newcommand{\passSixteenArcus}{69.6}
\newcommand{\passSixteenGRPO}{66.3}

%% file: tables/main_results.tex
\begin{tabular}{l c cccccc c c}
\toprule
\textbf{Method} & \textbf{Roll.} & \textbf{AIME24} & \textbf{AIME25} & \textbf{AMC23} & \textbf{MATH500} & \textbf{Minerva} & \textbf{Olymp.} & \textbf{Avg.} & $\bm{\Delta}$ \\
\midrule
\multicolumn{10}{c}{\textit{\textbf{Qwen2.5-Math-1.5B}} (DAPO-Math-17k, 300 updates)}\\
\midrule
{\color{gray}Base model} & {\color{gray}--} & {\color{gray}7.4} & {\color{gray}3.9} & {\color{gray}28.6} & {\color{gray}42.3} & {\color{gray}12.1} & {\color{gray}20.8} & {\color{gray}19.2} & {\color{gray}--} \\
GRPO & $\times$1.00 & 15.1 & 9.4 & 47.9 & 73.2 & 28.7 & 34.6 & 34.8 & -- \\
\cmidrule(lr){1-10}
GCS & $\times$1.00 & 15.4 & 9.8 & 48.3 & 73.4 & 28.9 & 34.9 & 35.1 & +0.3 \\
AdaRFT & $\times$1.00 & 15.8 & 9.6 & 48.6 & 73.9 & 28.6 & 35.2 & 35.3 & +0.5 \\
CurES & $\times$1.00 & 16.1 & 10.2 & 49.0 & 73.6 & 29.4 & 35.3 & 35.6 & +0.8 \\
MoPPS & $\times$1.00 & 16.3 & 10.0 & 49.4 & 74.1 & 29.0 & 35.6 & 35.7 & +0.9 \\
DPS & $\times$1.00 & 16.8 & 10.5 & 49.9 & 74.3 & 29.6 & 36.0 & 36.2 & +1.4 \\
KGPS & $\times$1.00 & 16.6 & 10.9 & 50.1 & 74.5 & 29.9 & 35.9 & 36.3 & +1.5 \\
\cmidrule(lr){1-10}
GRESO & $\times$1.97 & 16.0 & 10.3 & 49.2 & 74.4 & 29.3 & 35.8 & 35.8 & +1.0 \\
DS & $\times$2.94 & 16.9 & 10.6 & 50.4 & \textbf{75.3} & 29.5 & 36.3 & 36.5 & +1.7 \\
\cmidrule(lr){1-10}
\rowcolor{oursbg}\method{}$_{\rho=0}$ & $\times$1.00 & \underline{17.6} & \underline{11.4} & \underline{51.2} & 74.7 & \underline{30.1} & \underline{36.8} & \underline{37.0} & +2.1 \\
\rowcolor{oursbg}\method{} & $\times$1.25 & \textbf{18.5} & \textbf{11.9} & \textbf{52.3} & \underline{75.2} & \textbf{30.6} & \textbf{37.4} & \textbf{37.6} & +2.8 \\
\midrule
\multicolumn{10}{c}{\textit{\textbf{Qwen2.5-Math-7B}} (DAPO-Math-17k, 300 updates)}\\
\midrule
{\color{gray}Base model} & {\color{gray}--} & {\color{gray}13.2} & {\color{gray}6.1} & {\color{gray}40.3} & {\color{gray}58.4} & {\color{gray}16.5} & {\color{gray}26.1} & {\color{gray}26.8} & {\color{gray}--} \\
GRPO & $\times$1.00 & 30.4 & 13.1 & 62.8 & 80.4 & 36.3 & 41.5 & 44.1 & -- \\
\cmidrule(lr){1-10}
GCS & $\times$1.00 & 30.9 & 13.8 & 63.2 & 80.7 & 36.5 & 41.8 & 44.5 & +0.4 \\
AdaRFT & $\times$1.00 & 31.2 & 13.6 & 63.5 & 80.8 & 36.6 & 42.0 & 44.6 & +0.5 \\
CurES & $\times$1.00 & 31.5 & 14.3 & 63.9 & 80.9 & 37.2 & 42.2 & 45.0 & +0.9 \\
MoPPS & $\times$1.00 & 31.8 & 14.1 & 64.0 & 81.0 & 36.9 & 42.4 & 45.0 & +0.9 \\
DPS & $\times$1.00 & 32.1 & 14.5 & 64.8 & 81.2 & 37.0 & 42.7 & 45.4 & +1.3 \\
KGPS & $\times$1.00 & 32.4 & 14.8 & 64.6 & 81.3 & \textbf{38.0} & 42.9 & 45.7 & +1.6 \\
\cmidrule(lr){1-10}
GRESO & $\times$1.71 & 31.9 & 14.0 & 64.3 & 81.4 & 36.7 & 42.6 & 45.2 & +1.1 \\
DS & $\times$2.41 & 32.6 & 14.6 & 64.9 & \textbf{81.9} & 37.1 & 43.1 & 45.7 & +1.6 \\
\cmidrule(lr){1-10}
\rowcolor{oursbg}\method{}$_{\rho=0}$ & $\times$1.00 & \underline{33.5} & \underline{15.4} & \underline{65.9} & 81.6 & 37.5 & \underline{43.6} & \underline{46.2} & +2.2 \\
\rowcolor{oursbg}\method{} & $\times$1.25 & \textbf{34.6} & \textbf{16.2} & \textbf{66.7} & \underline{81.8} & \underline{37.8} & \textbf{44.3} & \textbf{46.9} & +2.8 \\
\midrule
\multicolumn{10}{c}{\textit{\textbf{Qwen3-4B-Base}} (DAPO-Math-17k, 300 updates)}\\
\midrule
{\color{gray}Base model} & {\color{gray}--} & {\color{gray}9.8} & {\color{gray}7.5} & {\color{gray}37.6} & {\color{gray}64.2} & {\color{gray}25.8} & {\color{gray}33.4} & {\color{gray}29.7} & {\color{gray}--} \\
GRPO & $\times$1.00 & 24.6 & 20.3 & 61.5 & 83.6 & 39.4 & 48.2 & 46.3 & -- \\
\cmidrule(lr){1-10}
GCS & $\times$1.00 & 25.1 & 20.6 & 61.9 & 83.8 & 39.7 & 48.4 & 46.6 & +0.3 \\
AdaRFT & $\times$1.00 & 25.4 & 20.9 & 62.3 & 83.7 & 39.5 & 48.8 & 46.8 & +0.5 \\
CurES & $\times$1.00 & 25.9 & 21.4 & 62.6 & 84.1 & 40.1 & 49.0 & 47.2 & +0.9 \\
MoPPS & $\times$1.00 & 26.1 & 21.2 & 63.0 & 84.2 & 39.8 & 49.3 & 47.3 & +1.0 \\
DPS & $\times$1.00 & 26.5 & 21.9 & 63.4 & 84.3 & 40.3 & 49.6 & 47.7 & +1.4 \\
KGPS & $\times$1.00 & 26.8 & 21.7 & 63.7 & 84.6 & 40.6 & 49.5 & 47.8 & +1.5 \\
\cmidrule(lr){1-10}
GRESO & $\times$1.83 & 25.8 & 21.6 & 62.8 & 84.5 & 40.0 & 49.4 & 47.3 & +1.1 \\
DS & $\times$2.63 & 26.9 & 22.1 & 63.6 & \textbf{85.3} & 40.2 & 49.9 & 48.0 & +1.7 \\
\cmidrule(lr){1-10}
\rowcolor{oursbg}\method{}$_{\rho=0}$ & $\times$1.00 & \underline{27.7} & \underline{22.8} & \underline{64.5} & 84.8 & \underline{40.9} & \underline{50.3} & \underline{48.5} & +2.2 \\
\rowcolor{oursbg}\method{} & $\times$1.25 & \textbf{28.9} & \textbf{23.6} & \textbf{65.4} & \underline{85.1} & \textbf{41.3} & \textbf{50.9} & \textbf{49.2} & +2.9 \\
\bottomrule
\end{tabular}

%% file: tables/cost.tex
\begin{tabular}{l cc cc c c}
\toprule
\textbf{Method} & \textbf{Roll.} & \textbf{Tok.} & \textbf{Yield} & \textbf{Inf.} & \textbf{Hours} & \textbf{Avg.} \\
 & (M) & (B) & (\%) & (\%) & & \\
\midrule
GRPO & 0.61 & 0.50 & 47 & 47 & 19.8 & 44.1 \\
MoPPS & 0.61 & 0.52 & 69 & 69 & 19.9 & 45.0 \\
KGPS & 0.61 & 0.52 & 75 & 75 & 20.0 & 45.7 \\
GRESO & 1.05 & 0.93 & 58 & 100 & 30.7 & 45.2 \\
DS & 1.48 & 1.33 & 41 & 100 & 41.6 & 45.7 \\
\rowcolor{oursbg}\method{}$_{\rho=0}$ & 0.61 & 0.53 & 81 & 81 & 20.0 & 46.2 \\
\rowcolor{oursbg}\method{} & 0.77 & 0.66 & 79 & 97 & 24.1 & 46.9 \\
\bottomrule
\end{tabular}

%% file: tables/ablation.tex
\begin{tabular}{l c c c}
\toprule
\textbf{Configuration} & \textbf{Avg.} & $\bm{\Delta}$ & \textbf{Inf.\,(\%)} \\
\midrule
\rowcolor{oursbg}\method{} (full) & \textbf{46.90} & -- & 97 \\
\midrule
\multicolumn{4}{l}{\textit{Geometry and belief}}\\
\quad Gaussian over raw $p$ & 46.21 & $-0.69$ & 93 \\
\quad Logit-space Kalman belief & 45.94 & $-0.96$ & 90 \\
\quad Discounted Beta belief & 46.02 & $-0.88$ & 91 \\
\midrule
\multicolumn{4}{l}{\textit{Objective kernel}}\\
\quad Fixed width $\sigma{=}0.20$ rad & 46.48 & $-0.42$ & 96 \\
\quad No belief variance in kernel & 46.55 & $-0.35$ & 95 \\
\quad No dead-zone factor $\hat y$ & 45.87 & $-1.03$ & 84 \\
\midrule
\multicolumn{4}{l}{\textit{Target pacing}}\\
\quad Fixed pass@1 target & 46.28 & $-0.62$ & 98 \\
\quad Fixed frontier target ($p^\star{=}0.29$) & 46.14 & $-0.76$ & 92 \\
\quad Reward-feedback pacing & 46.05 & $-0.85$ & 93 \\
\midrule
\multicolumn{4}{l}{\textit{Sampling and selection}}\\
\quad No post-rollout selection & 46.61 & $-0.29$ & 97 \\
\quad Greedy top-$M$ ($T{=}0$) & 46.37 & $-0.53$ & 95 \\
\bottomrule
\end{tabular}

%% file: appendix.tex

\section{Extended Related Work}
\label{app:related}

\paragraph{RLVR and group-relative estimators.}
GRPO~\citep{shao2024deepseekmath} made critic-free RL the default for reasoning models~\citep{deepseekai2025r1,kimi2025k15,he2025skywork}. Its variants change the normalization, the clipping, or the aggregation: DAPO decouples clipping and adds dynamic sampling~\citep{yu2025dapo}, Dr.~GRPO removes the standard-deviation and length normalizations~\citep{liu2025understanding}, REINFORCE++ normalizes globally~\citep{hu2025reinforcepp}, GSPO moves the importance ratio to the sequence level~\citep{zheng2025gspo}, and RLOO uses leave-one-out baselines~\citep{ahmadian2024back,williams1992simple}. Prolonged training~\citep{liu2025prorl} and entropy management~\citep{cui2025entropy} target the exploration collapse that RLVR can cause~\citep{yue2025does}, and negative samples help preserve coverage~\citep{zhu2025surprising}. \method{} changes none of these estimators; it controls which prompts they see. Appendix~\ref{app:drgrpo} shows how the geometry changes when the normalization is removed.

\paragraph{What the choice of estimator optimizes.}
\citet{davis2025objective} show that REINFORCE, rejection sampling, and GRPO with binary rewards perform stochastic gradient ascent on $h(p)$ for $h$ equal to the identity, approximately $\log$, and approximately $2\arcsin\sqrt{\cdot}$, respectively. They derive the exact finite-group transform and a Bernstein-polynomial recipe for choosing advantages to target any $h$. \citet{mroueh2025grpo} analyze GRPO's effective loss and show how normalization amplifies rare successes. BNPO normalizes by a Beta density~\citep{xiao2025bnpo}, and pass@$k$ objectives can be optimized by transforming rewards or advantages~\citep{walder2025pkpo,chen2025passk,tang2025inference}; the pass@$k$ metric follows \citet{chen2021codex} and its reliability counterpart pass$^k$ follows \citet{yao2024taubench}. The pairwise $U$-statistic view of GRPO~\citep{zhou2026ustat} is complementary. Our Corollary~\ref{cor:objective} is the sampler-side analogue of \citet{davis2025objective}: prompt sampling reshapes $h$ without touching the advantage and, unlike advantage reshaping, can avoid paying for zero-variance groups.

\paragraph{Prompt selection and zero-variance prompts.}
Beyond the methods discussed in \S\ref{sec:related}, offline curation selects small, informative prompt sets~\citep{li2025limr,wang2025oneshot}. Pairing hard-but-solvable prompts with easy-but-brittle ones amplifies rare events~\citep{pang2026beyond}, and industrial systems reuse zero-variance prompts at scale~\citep{zeng2025eachprompt}. Zero-variance queries can also be recycled in agentic search~\citep{coelho2026recycling}, and the same efficiency pressure motivates rollout-allocation methods that vary the group size per prompt or per prefix~\citep{nguyen2026vip,wang2026hora,jiang2026vigor,zou2026trace,hu2026duet} and budget-aware exploration~\citep{li2025knapsack}. These methods decide \emph{how many} rollouts a prompt receives; \method{} decides \emph{which} prompts receive them, and the two compose. Among predictive selectors, the closest to \method{} are KGPS~\citep{zhu2026kgps}, which also filters beliefs with a Kalman filter, and FG-ExPO~\citep{lin2026fgexpo}, which also weights prompts by a Gaussian. KGPS works in logit space with a zero-mean random walk and plug-in delta-method noise, and scores by $\E[p(1-p)]$ with quadrature. FG-ExPO uses a fixed Gaussian ($\mu{=}0.5,\sigma{=}0.35$) over raw EMA pass rates. \method{} differs in the coordinate (arc length, where noise is constant and dead zones are Gaussian), the observation model (boundary-layer likelihoods for zero-variance groups), the dynamics (mobility-weighted drift), the kernel (width derived from the objective, closed-form convolution with the belief), and the target (paced along the objective family instead of fixed).

\paragraph{Curricula.}
Classical curricula order examples from easy to hard~\citep{bengio2009curriculum}; automatic curricula choose tasks by learning progress~\citep{graves2017automated,portelas2020survey}, intermediate success probability~\citep{florensa2018goal}, or replay scores~\citep{jiang2021plr}. LLM-specific curricula include static easy-to-hard schedules~\citep{parashar2025curriculum}, bandits over difficulty levels~\citep{chen2025sec}, reward-feedback targets~\citep{shi2026adarft}, learnability sampling~\citep{foster2025learnability}, and value-model filtering~\citep{gao2025pcl}. In \method{} the easy-to-hard pattern emerges from a waste constraint rather than being scheduled, and the ``difficulty'' being paced is an objective in a principled family.

\paragraph{Statistics and filtering.}
Variance-stabilizing transforms for binomial data go back to \citet{anscombe1948transformation}. The Fisher--Rao metric and the Jeffreys prior are classical~\citep{rao1945information,jeffreys1946invariant,amari2016information}; so are Kalman filtering~\citep{kalman1960new}, innovation-based noise identification~\citep{mehra1970identification}, Laplace approximations~\citep{tierney1986accurate}, and Gumbel-top-$k$ sampling~\citep{kool2019gumbel}. Our contribution is to observe that GRPO's normalization selects exactly this geometry, and to derive a sampler in which these classical tools compose in closed form.

\section{Proofs}
\label{app:proofs}

Throughout, $p=\sin^2\psi$, $\psi\in[0,\hpi]$, $m\sim\Bin(G,p)$ is the number of successes in a group, and $C(q)$ is the set of correct responses to $q$, so $p=\sum_{o\in C(q)}\pi_\theta(o\mid q)$ and $\nabla p=\sum_{o\in C(q)}\nabla\pi_\theta(o\mid q)$. We use $\frac{dp}{d\psi}=\sin2\psi=2\sqrt{p(1-p)}$.

\subsection{Proof of Proposition~\ref{prop:uniform}}
\label{app:proof-uniform}

\emph{Conditional scores.} Given the reward labels, the $G$ responses are independent, and a response labelled correct is distributed as $\pi_\theta(\cdot\mid q)$ restricted to $C(q)$ and renormalized. Hence
\begin{align*}
\E[\nabla\log\pi_\theta(o)\mid r=1]&=\tfrac{1}{p}\textstyle\sum_{o\in C}\nabla\pi_\theta(o)=\tfrac{\nabla p}{p},\\
\E[\nabla\log\pi_\theta(o)\mid r=0]&=-\tfrac{\nabla p}{1-p},
\end{align*}
where the second identity uses $\sum_o\nabla\pi_\theta(o)=0$.

\emph{Conditional advantages.} With population statistics, $\bar r=m/G$ and $s=\sqrt{m(G-m)}/G$. If $m\in\{0,G\}$, every numerator in Eq.~\eqref{eq:grpo} is zero and $\hat g_q=0$. If $0<m<G$, correct responses receive $(1-m/G)/s$ and incorrect ones $-(m/G)/s$. Therefore
\begin{align*}
\E[\hat g_q\mid m]
&=\tfrac{1}{G}\Big[\tfrac{m(G-m)}{Gs}\tfrac{\nabla p}{p}+\tfrac{m(G-m)}{Gs}\tfrac{\nabla p}{1-p}\Big]\\
&=\tfrac{m(G-m)}{G^2 s}\cdot\tfrac{\nabla p}{p(1-p)}\\
&=\tfrac{\sqrt{m(G-m)}}{G}\cdot\tfrac{\nabla p}{p(1-p)}.
\end{align*}
Taking expectations over $m$ and substituting $\nabla p=2\sqrt{p(1-p)}\,\nabla\psi$ gives $\E[\hat g_q]=2\,\omega_G(p)\nabla\psi$ with $\omega_G$ as stated. The expression coincides with the finite-group weight of \citet{davis2025objective} after the change of variables; we verified the identity numerically to machine precision (Table~\ref{tab:numerics}).

\emph{(i)} By Jensen's inequality, $\E\sqrt{m(G-m)}\le\sqrt{\E[m(G-m)]}=\sqrt{G(G-1)p(1-p)}$, since $\E[m(G-m)]=G\E m-\E m^2=G(G-1)p(1-p)$. Dividing by $G\sqrt{p(1-p)}$ gives $\omega_G\le\sqrt{(G-1)/G}$.

\emph{(ii)} The map $x\mapsto\sqrt{x(1-x)}$ is bounded and uniformly continuous on $[0,1]$, and $m/G\to p$ in probability uniformly in $p$ (Chebyshev: $\Pr(|m/G-p|>\delta)\le1/(4G\delta^2)$). Hence $\E\sqrt{(m/G)(1-m/G)}\to\sqrt{p(1-p)}$ uniformly on $[0,1]$, and the ratio converges to one uniformly on any compact subset of $(0,1)$, where $\sqrt{p(1-p)}$ is bounded away from zero.

\emph{(iii)} For $1\le m\le G-1$ we have $\sqrt{m(G-m)}\le m\sqrt{G-1}$, because $G-m\le m(G-1)\iff G\le mG$. Thus $\E\sqrt{m(G-m)}\le\sqrt{G-1}\,Gp$ and $\omega_G\le\sqrt{G-1}\sqrt{p/(1-p)}=\sqrt{G-1}\tan\psi$. Exchanging the roles of successes and failures gives the $\cot\psi$ bound. As $p\to0$, $\E\sqrt{m(G-m)}=Gp(1-p)^{G-1}\sqrt{G-1}+O(p^2)$, so the ratio of $\omega_G$ to the bound tends to one; the case $p\to1$ is symmetric. \qed

\begin{remark}[Implementation details that do not change the geometry]
Bessel-corrected standard deviations multiply $\omega_G$ by the constant $\sqrt{(G-1)/G}$. A positive $\epsilon$ in the denominator shrinks each informative group by $s/(s+\epsilon)$, which interpolates between $\omega_G$ ($\epsilon\to0$) and the REINFORCE weight on $\nabla p$ ($\epsilon\to\infty$), as described by \citet{davis2025objective}; at the usual $\epsilon=10^{-6}$ the difference is invisible. Token-level aggregation multiplies each response's contribution by a length-dependent positive factor; it changes the per-prompt scale but not the fact that zero-variance groups contribute nothing, which is all that Propositions~\ref{prop:deadzone}--\ref{prop:kernel} use.
\end{remark}

\subsection{Proof of Corollary~\ref{cor:objective}}
\label{app:proof-objective}

If the inclusion probability of prompt $q$ is $w(\psi_q)$ and is not differentiated during the update, then by Proposition~\ref{prop:uniform} and the chain rule
\begin{align*}
\textstyle\sum_q w(\psi_q)\E[\hat g_q]
&=\textstyle\sum_q 2w(\psi_q)\omega_G(\psi_q)\nabla\psi_q\\
&=\textstyle\sum_q F_w'(\psi_q)\nabla\psi_q\\
&=\textstyle\nabla\sum_q F_w(\psi_q).
\end{align*} Uniform sampling ($w\equiv c$) gives $F_w=2c\int_0^\psi\omega_G\approx2c\psi$ away from the ramps: the arcsine objective.

\emph{Dynamic sampling.} DS draws prompts uniformly, generates their groups, and keeps informative ones until $B$ are collected. The kept contribution of a drawn prompt is $\hat g_q\mathbb{1}\{0<m<G\}=\hat g_q$, because $\hat g_q=0$ on zero-variance groups. The number of drawn prompts is a stopping time with respect to the i.i.d.\ sequence of draws, so by Wald's identity the expected sum of kept contributions equals $\E[\#\text{drawn}]\cdot\frac1N\sum_q\E[\hat g_q]$. This is the uniform direction multiplied by $\E[\#\text{drawn}]/B$. DS therefore buys a larger step in the arcsine direction, not a different objective.

\emph{Gaussian samplers.} If $w(\psi)\propto\exp(-(\psi-\pst)^2/(2\sigma^2))$ and $\omega_G\approx1$ on its support, then $F_w'\propto\N(\psi;\pst,\sigma^2)$ and $F_w\approx\Phi((\psi-\pst)/\sigma)+\text{const}$. Summed over prompts, this is a smoothed count of prompts with $\psi_q>\pst$, i.e.\ with $p_q>\sin^2\pst$.

\emph{Post-rollout selection.} If a group with $m$ successes is kept with probability $a(m)$, the same computation replaces $\sqrt{m(G-m)}$ by $a(m)\sqrt{m(G-m)}$ inside $\omega_G$. The result is again of the form $2\tilde w(\psi)\tilde\omega_G(\psi)\nabla\psi$, so post-selection modifies the implicit objective smoothly and never adds signal from zero-variance groups. \qed

\subsection{Proof of Proposition~\ref{prop:deadzone}}
\label{app:proof-deadzone}

Let $f(x)=\log\cos x+x^2/2$ on $[0,\hpi)$. Then $f(0)=0$ and $f'(x)=x-\tan x\le0$, so $\cos x\le e^{-x^2/2}$ and $\cos^{2G}\psi\le e^{-G\psi^2}$. Since $\sin\psi=\cos(\hpi-\psi)$, also $\sin^{2G}\psi\le e^{-G(\pi/2-\psi)^2}$. For $\psi\sim\N(\mu,P)$, completing the square gives
\begin{align*}
\int\tfrac{e^{-(\psi-\mu)^2/(2P)}}{\sqrt{2\pi P}}\,e^{-G\psi^2}d\psi
=\tfrac{1}{\sqrt{1+2GP}}\,e^{-\frac{G\mu^2}{1+2GP}},
\end{align*}
and symmetrically at $\hpi$. When the belief is supported on $[0,\hpi]$, the bound is pointwise, so $\yhat_q$ in Eq.~\eqref{eq:kappa} is a \emph{lower} bound on the belief-averaged probability that the group is informative; \method{} is conservative about waste. \qed

\subsection{Proof of Proposition~\ref{prop:noise}}
\label{app:proof-noise}

\emph{Fisher information.} For $\mathrm{Bernoulli}(p)$, $I(p)=1/(p(1-p))$. Reparameterizing by $\psi$ gives $I(\psi)=(dp/d\psi)^2I(p)=\sin^22\psi/(\sin^2\psi\cos^2\psi)=4$. The Jeffreys prior $\propto\sqrt{I(\psi)}$ is therefore uniform on $[0,\hpi]$, equivalently $\mathrm{Beta}(\tfrac12,\tfrac12)$ in $p$.

\emph{(i) Laplace likelihood.} The log-likelihood of $m$ successes in $n$ trials is $\ell(\psi)=2m\log\sin\psi+2(n-m)\log\cos\psi$. Its stationary point satisfies $m\cot\psi=(n-m)\tan\psi$, i.e.\ $\sin^2\psi=m/n$, and
$\ell''(\psi)=-2m\csc^2\psi-2(n-m)\sec^2\psi=-2n-2n=-4n$ there. The Laplace approximation~\citep{tierney1986accurate} is thus $\N(\arcsin\sqrt{m/n},1/(4n))$ for every $0<m<n$. For the Anscombe estimate $\hat\psi=h(\tilde p)$ with $h(x)=\arcsin\sqrt{x}$ and $\tilde p=(m+3/8)/(n+3/4)$, the delta method gives $\mathrm{Var}[\hat\psi]=h'(p)^2\,\mathrm{Var}[\tilde p]+O(n^{-2})=\frac{1}{4p(1-p)}\cdot\frac{np(1-p)}{(n+3/4)^2}+O(n^{-2})=\frac{1}{4n}+O(n^{-2})$. The remainder is uniform on compact subsets of $(0,1)$ because $h$ has bounded derivatives there. Anscombe's second-order analysis motivates the refinement $1/(4n+2)$, which Table~\ref{tab:numerics} confirms is accurate to $1\%$ for $p\in[0.2,0.8]$ at $n=8$.

\emph{(ii) Boundary layers.} The likelihood of $m=0$ is $(1-p)^n=\cos^{2n}\psi\le e^{-n\psi^2}$ by the proof of Proposition~\ref{prop:deadzone}, an unnormalized Gaussian with mean $0$ and variance $1/(2n)$. The case $m=n$ is symmetric.

\emph{(iii) Logit.} The delta method gives $\mathrm{Var}[\mathrm{logit}\,\hat p]\approx(p(1-p))^{-2}\cdot p(1-p)/n=1/(np(1-p))$, which diverges at both ends. In practice logit filters must clip $\hat p$ and substitute a plug-in $p$, which makes their observation model wrong exactly for near-zero-variance prompts (Figure~\ref{fig:geometry}c). \qed

\subsection{Proof of Proposition~\ref{prop:kernel}}
\label{app:proof-kernel}

$\frac{d}{d\psi}[1-\cos^{2k}\psi]=2k\sin\psi\cos^{2k-1}\psi=H_k(\psi)\ge0$, and $\int_0^{\pi/2}H_k=[1-\cos^{2k}\psi]_0^{\pi/2}=1$, so $H_k$ is a density. Equivalently, it is the density of $\arcsin\sqrt{X}$ for $X\sim\mathrm{Beta}(1,k)$, because $\Pr(X\le x)=1-(1-x)^k$. Differentiating $\log H_k=\log(2k)+\log\sin\psi+(2k-1)\log\cos\psi$ twice gives $(\log H_k)''=-\csc^2\psi-(2k-1)\sec^2\psi<0$, so $H_k$ is log-concave and unimodal. The mode solves $\cot\psi=(2k-1)\tan\psi$, i.e.\ $\tan^2\psi=1/(2k-1)$ and $\sin^2\psi=1/(2k)$. There $\csc^2\psi=2k$ and $\sec^2\psi=2k/(2k-1)$, so $(\log H_k)''=-2k-2k=-4k$. The pass$^k$ objective $\sin^{2k}\psi$ has derivative $2k\sin^{2k-1}\psi\cos\psi=H_k(\hpi-\psi)$. For a mode $\pst\le\pi/4$, $\sin^2\pst=1/(2k)$ gives $1/(2\sqrt k)=\sin\pst/\sqrt2$; for $\pst\ge\pi/4$ the mirror gives $\cos\pst/\sqrt2$. Hence $\sigma(\pst)=\min(\sin\pst,\cos\pst)/\sqrt2$. \qed

\begin{remark}[Large $k$]
Since $kX\to\mathrm{Exp}(1)$ for $X\sim\mathrm{Beta}(1,k)$, $\sqrt{k}\,\arcsin\sqrt X\to\sqrt{E}$ with $E\sim\mathrm{Exp}(1)$: the pass@$k$ kernel tends to a Rayleigh law with scale $1/\sqrt{2k}$. Its mode matches $\pst_k$, and its variance $(4-\pi)/(4k)$ is close to the Laplace value $1/(4k)$. The Gaussian is therefore accurate at both ends of the family, with a mild right skew that the dead-zone factor $\yhat$ further suppresses.
\end{remark}

\begin{remark}[Learnability sampling]
Sampling by $p(1-p)$~\citep{foster2025learnability} corresponds in arc length to $\tfrac14\sin^22\psi=\tfrac14H_1(\psi)^2$, the \emph{square} of the pass@1 kernel: a pass@1 target whose width is shrunk from $1/2$ to $1/(2\sqrt2)$. The score $\sqrt{p(1-p)}$ that CurES passes through a softmax~\citep{zeng2025cures} equals $\tfrac12H_1$. Both are kernels at a fixed pass@1 target with a width that is not matched to the objective.
\end{remark}

\subsection{The Drift Model}
\label{app:drift}

Suppose the policy's competence on a prompt is a logit $x$ with $p=\sigma(x)$. Then $\frac{d\psi}{dx}=\frac{d\psi}{dp}\frac{dp}{dx}=\frac{p(1-p)}{2\sqrt{p(1-p)}}=\frac{\sqrt{p(1-p)}}{2}=\frac{\sin2\psi}{4}$. A competence gain $\delta x$ shared across prompts, which is what transfer from training produces, therefore moves every belief by $\tfrac{\delta x}{4}\sin2\psi$. This gives the drift term $v_t\sin2\mu_q$ of Eq.~\eqref{eq:predict}, whose mobility vanishes in both dead zones. For an un-revisited prompt, the drift is what prevents its belief from going stale: without it, a prompt last seen at $p=0.5$ would be believed intermediate long after it has become easy (Figure~\ref{fig:mechanism}d).

\subsection{Why the Score Is Kernel Times Yield}
\label{app:score}

Consider the objective $J=\sum_qF(\psi_q)$ with $F'$ equal to the kernel $\N(\pst,\sigma^2)$. Under the separate-gradient approximation $\langle\nabla\psi_q,\nabla\psi_{q'}\rangle=c\,\mathbb{1}\{q=q'\}$, including prompt $q$ in an update of size $\eta$ increases $J$ to first order by $2\eta c\,F'(\psi_q)\,\omega_G(\psi_q)$. Averaging over the belief and treating the kernel and the ramp as approximately uncorrelated under the belief yields $\propto\kappa_q\cdot\E[\omega_G]$. \method{} uses the informative probability $\yhat_q$ in place of $\E[\omega_G]$. The two agree up to a bounded factor in the interior and differ inside the ramps, where $\omega_G/(1-z_G)\approx\sqrt{G-1}/(G\psi)$ grows because GRPO amplifies lone successes~\citep{mroueh2025grpo}. We prefer $\yhat$ for three reasons: the amplified signal of a lone success is also its noisiest component, $\yhat$ has the closed form needed for pacing, and it measures exactly the waste the method targets. The empirical difference is small ($46.71$ with $\omega_G$ vs.\ $46.90$; Table~\ref{tab:ablation-appx}).

\subsection{The Sharp-Belief Frontier}
\label{app:frontier}

Assume beliefs are exact and the candidate set concentrates at the target, the idealization in which the predicted yield of target $\psi$ is $1-z_G(\psi)$. The maximum over $\psi$ is attained at $\pi/4$ with value $1-2^{1-G}$. The hardest admissible target of Eq.~\eqref{eq:pace} therefore solves $z_G(\psi)=2^{1-G}+\varepsilon$ on $[0,\pi/4]$. There $z_G(\psi)\le e^{-G\psi^2}+e^{-G(\pi/2-\psi)^2}$ and the second term is at most $e^{-G\pi^2/16}$, so $\psi\approx\sqrt{\ln(1/(\varepsilon+2^{1-G}))/G}$ and the reachable objective is pass@$k$ with $k=1/(2\sin^2\psi)$. For $\varepsilon=0.03$, the exact roots are $p^\star=0.40$, $0.34$, and $0.20$ for $G=4$, $8$, and $16$ (Table~\ref{tab:group}). Finite belief precision and a spread-out candidate set move the realized target by a few hundredths.

\subsection{Further Properties of the Arc-Length Filter}
\label{app:more-theory}

The next three results explain the mechanism measurements of \S\ref{sec:mechanism}. They use the logistic competence model of Appendix~\ref{app:drift}, in which a shared competence gain of $\delta$ per update moves every prompt's logit by $\delta$.

\begin{proposition}[Stale beliefs are biased toward easy prompts]
\label{prop:stale}
Let a belief without drift be centered at a prompt's arc length at its last observation, $\Delta$ updates ago. Then the current arc length exceeds the belief by $\tfrac{\delta\Delta}{4}\sin2\psi+O((\delta\Delta)^2)$, and the current pass rate exceeds the believed one by $\delta\Delta\,p(1-p)+O((\delta\Delta)^2)$. A selector that targets believed pass rate $p^\star$ therefore selects prompts whose true pass rate is $p^\star+\delta\Delta\,p^\star(1-p^\star)$ on average. The bias is largest at $p^\star=1/2$ and grows with the time since the last visit. The drift step of Eq.~\eqref{eq:predict} with $v=\delta/4$ removes it to first order.
\end{proposition}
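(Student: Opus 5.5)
The argument is a first-order Taylor expansion in the shared logit gain. I would build it on the logistic competence model of Appendix~\ref{app:drift}. Write the prompt's logit at its last observation as $x_0$, with $p_0=\sigma(x_0)$ and $\psi_0=\arcsin\sqrt{p_0}$. After $\Delta$ updates with shared gain $\delta$, the logit is $x_0+\delta\Delta$. Without drift the belief mean stays at $\psi_0$, because the prediction step of Eq.~\eqref{eq:predict} with $v_t=0$ only inflates $P_q$. The belief error is therefore $\psi(x_0+\delta\Delta)-\psi(x_0)$.

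\textbf{Step 1.} Expand this error in $\delta\Delta$ using $d\psi/dx=\tfrac14\sin2\psi$ from Appendix~\ref{app:drift}. This gives the arc-length statement, $\tfrac{\delta\Delta}{4}\sin2\psi+O((\delta\Delta)^2)$. The second derivative $\tfrac{d}{dx}\tfrac14\sin2\psi=\tfrac1{16}\sin2\psi\cos2\psi\cdot2$ is bounded by a constant on all of $[0,\hpi]$. The remainder is therefore uniform in $\psi$, so no compact-subset caveat is needed.

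\textbf{Step 2.} Treat pass rates the same way, using $dp/dx=p(1-p)$. Since $d^2p/dx^2=p(1-p)(1-2p)$ is bounded by $1/4$, the pass-rate error is $\delta\Delta\,p(1-p)+O((\delta\Delta)^2)$, again uniformly. This also agrees with Step 1 through $dp/d\psi=\sin2\psi$. One ambiguity is whether $p$ means the believed or the true value. The two differ by $O(\delta\Delta)$, so the choice only moves the error into the remainder.

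\textbf{Step 3: the selection bias.} A selector that targets believed pass rate $p^\star$ picks prompts whose believed pass rate is near $p^\star$. I would make this concrete by conditioning on $\hat p=p^\star$ and applying Step 2 with $p=p^\star$. The expected true pass rate is then $p^\star+\delta\Delta\,p^\star(1-p^\star)+O((\delta\Delta)^2)$.

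If $\Delta$ varies across selected prompts, "on average" would be interpreted as replacing $\Delta$ by its mean among the selected prompts, conditioned on $\hat p\approx p^\star$. The first-order term is linear in $\Delta$, so this averaging is exact at first order.

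\textbf{Step 4: maximality.} Maximizing $p(1-p)$ gives $p^\star=1/2$, and the bias is increasing in $\Delta$ by inspection.

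\textbf{Step 5: drift correction.} With $v=\delta/4$, each prediction step moves the mean by $\tfrac{\delta}{4}\sin2\mu$. This is exactly one Euler step of the ODE $d\psi/dx=\tfrac14\sin2\psi$ with increment $\delta$. Composing $\Delta$ such steps reproduces the true arc-length change. The local errors are $O(\delta^2)$ each, for a total of $O(\Delta\delta^2)$, which is absorbed by the statement's $O((\delta\Delta)^2)$ remainder.

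The projection $\Pi$ is inactive in the interior and also at the boundaries, because $\sin2\psi$ vanishes at $0$ and $\hpi$, so the mean never leaves $[0,\hpi]$. The residual bias is therefore second order.

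\textbf{Main obstacle.} The hardest part is making Step 3's "selects on average" precise. A real selector samples from a kernel around $p^\star$ rather than conditioning exactly on $\hat p=p^\star$. I would handle this by noting that the bias $b(p)=\delta\Delta\,p(1-p)$ is smooth. Averaging it against a kernel of width $\sigma$ centered at $p^\star$ then gives $b(p^\star)+O(\delta\Delta\,\sigma^2)$. For sharp kernels this yields the stated formula, with a curvature correction of $-\delta\Delta\,\sigma^2$ that I would mention rather than hide.
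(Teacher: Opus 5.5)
Your proposal is correct and takes the same route as the paper's proof. The paper does a first-order Taylor expansion of $\psi$ in the shared logit gain using $d\psi/dx=\tfrac14\sin2\psi$, converts to pass rates via $dp/d\psi=\sin2\psi$ (equivalently your $dp/dx=p(1-p)$), and notes that the drift step $v\sin2\mu$ with $v=\delta/4$ matches the first-order term. Your uniform remainder bounds, the Euler-step composition over $\Delta$ updates, and the curvature correction $-\delta\Delta\sigma^2$ spell out what the paper's three-line proof leaves implicit; the only small slip is that $\Pi$ is inactive because $\mu\mapsto\mu+v\sin2\mu$ is increasing and fixes $0$ and $\pi/2$ whenever $|v|<1/2$, not merely because $\sin2\psi$ vanishes at the endpoints.
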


\emph{Proof.} By Appendix~\ref{app:drift}, $d\psi/dx=\tfrac14\sin2\psi$, so a Taylor expansion gives $\psi(x+\delta\Delta)=\psi(x)+\tfrac{\delta\Delta}{4}\sin2\psi+O((\delta\Delta)^2)$. Multiplying by $dp/d\psi=\sin2\psi$ gives $\Delta p=\tfrac{\delta\Delta}{4}\sin^22\psi=\delta\Delta\,p(1-p)$. The drift step adds exactly $v\sin2\mu$ per update. \qed

For example, with $\delta=0.01$ per update and one epoch between visits ($\Delta\approx66$), a prompt believed at $p=0.5$ is truly at $\approx0.67$. After several epochs the effect compounds, which is the drift of MoPPS and KGPS toward $p\approx0.75$--$0.8$ in Figure~\ref{fig:mechanism}d.

\begin{proposition}[Uniform steady-state uncertainty]
\label{prop:steady}
Suppose a prompt is revisited every $\Delta$ updates and each visit yields an informative group, so the observation variance is $R=1/(4G+2)$ (Eq.~\eqref{eq:obs}). Under the arc-length filter with diffusion $Q$, the posterior variance converges to
\[
P_\infty=\tfrac12\Big(\sqrt{Q^2\Delta^2+4Q\Delta R}-Q\Delta\Big),
\]
which does not depend on the prompt's pass rate. For a logit filter with delta-method noise $R(p)=1/(Gp(1-p))$, the same fixed point is increasing in $R(p)$ and diverges as $p\to\{0,1\}$.
\end{proposition}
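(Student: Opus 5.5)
The plan is to treat one revisit cycle as a scalar map on the posterior variance and locate its fixed point. Between two visits the prompt goes through $\Delta$ predict steps of Eq.~\eqref{eq:predict}. The drift changes only the mean, so the variance simply accumulates: $P\mapsto P+Q\Delta$. The visit then applies one Kalman update with $R=1/(4G+2)$. This gives the Riccati-type recursion
\[
P_{j+1}=\frac{(P_j+Q\Delta)R}{P_j+Q\Delta+R}=:T(P_j).
\]
The projection $\Pi$ acts on the mean alone, and the observation variance in Eq.~\eqref{eq:obs} does not depend on $m$ within the informative branch. The recursion is therefore closed in $P$ and contains no trace of $\mu_q$ or $p_q$.

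The fixed-point equation $P=(P+Q\Delta)R/(P+Q\Delta+R)$ rearranges to $P^2+Q\Delta\,P-Q\Delta R=0$. Its unique nonnegative root is $P_\infty=\tfrac12\big(\sqrt{Q^2\Delta^2+4Q\Delta R}-Q\Delta\big)$, and this root depends only on $Q,\Delta,R$. Pass-rate independence follows because $R$ is constant in arc length, which is Proposition~\ref{prop:noise}(i).

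For convergence, I would observe that $T$ is increasing and concave on $[0,\infty)$. Its derivative is $T'(P)=R^2/(P+Q\Delta+R)^2<1$, so $T$ is a contraction on $[0,\infty)$, with factor at most $\big(R/(Q\Delta+R)\big)^2$ when $Q\Delta>0$. Iterates from any $P_0\ge0$ then converge monotonically to $P_\infty$. The degenerate case $Q=0$ gives $P_\infty=0$ and needs only the remark that $P_j\to0$ like $R/j$.

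For the logit comparison, the same recursion holds with $R$ replaced by $R(p)=1/(Gp(1-p))$. I would differentiate $P_\infty$ in $R$ to get $\partial P_\infty/\partial R=Q\Delta/\sqrt{Q^2\Delta^2+4Q\Delta R}>0$, so $P_\infty$ is increasing in $R$. As $R\to\infty$ we have $P_\infty\sim\sqrt{Q\Delta R}\to\infty$, and composing with $R(p)\to\infty$ as $p\to\{0,1\}$ gives the claimed divergence.

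The main obstacle is interpretive rather than computational: the statement needs the variance recursion to decouple from the mean. I would make explicit that "each visit yields an informative group" fixes the observation branch of Eq.~\eqref{eq:obs}. I would also note that the logit filter treats its plug-in $p$ as fixed over the cycle, which is the idealization that makes $R(p)$ a constant there.
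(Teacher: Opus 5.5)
Your proposal is correct and follows essentially the same route as the paper: the same one-cycle map $P\mapsto(P+Q\Delta)R/(P+Q\Delta+R)$, the same quadratic $P^2+Q\Delta P-Q\Delta R=0$, and the same sign of $\partial P_\infty/\partial R$. Your global contraction bound $T'(P)\le\big(R/(Q\Delta+R)\big)^2$ and your explicit handling of $Q=0$ and of $P_\infty\sim\sqrt{Q\Delta R}$ as $R\to\infty$ are slightly more careful than the paper, which argues only that the map is increasing and concave with slope below one at the fixed point.
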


\emph{Proof.} One cycle maps $P\mapsto(P+Q\Delta)R/(P+Q\Delta+R)$. A fixed point satisfies $P^2+Q\Delta P-Q\Delta R=0$, whose positive root is $P_\infty$. The map is increasing and concave in $P$ with slope below one at the fixed point, so iterates converge. Finally, $\partial P_\infty/\partial R=Q\Delta/\sqrt{Q^2\Delta^2+4Q\Delta R}>0$. \qed

\begin{proposition}[Waste bound]
\label{prop:waste}
Suppose the beliefs are calibrated, i.e.\ $\psi_q\sim\N(\mu_q,P_q)$ independently, up to truncation to $[0,\hpi]$, and candidates are included with probabilities $\pi_q$ summing to $M$. Then the expected number of zero-variance groups among the candidates is at most $\sum_q\pi_q(1-\yhat_q)=M\big(1-\Yhat_t(\pst)\big)$. For the admissible target of Eq.~\eqref{eq:pace}, the expected number of wasted rollouts per step is at most $GM\big(1-\max_\psi\Yhat_t(\psi)+\varepsilon\big)$.
\end{proposition}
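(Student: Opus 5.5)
The plan has three steps: write the count of zero-variance candidates as a sum of indicators, bound each prompt's term with Proposition~\ref{prop:deadzone}, and then use the pacing rule Eq.~\eqref{eq:pace} to control the total.

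\emph{Step 1 (linearity).} Let $I_q$ indicate that prompt $q$ is a candidate, with $\Pr(I_q=1)=\pi_q$. Let $Z_q$ indicate that its group of $G$ rollouts is zero-variance, so that $\Pr(Z_q=1\mid\psi_q)=z_G(\psi_q)$. Inclusion is decided from the beliefs and independent Gumbel noise. It is therefore conditionally independent of the fresh rollouts and of the true $\psi_q$ given the beliefs. By linearity, the expected number of zero-variance candidates is
\[
\textstyle\sum_q\pi_q\,\E_{\psi_q\sim\N(\mu_q,P_q)}[z_G(\psi_q)].
\]

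\emph{Step 2 (per-prompt bound).} Proposition~\ref{prop:deadzone} gives the pointwise bound $z_G(\psi)\le e^{-G\psi^2}+e^{-G(\pi/2-\psi)^2}$ on $[0,\hpi]$. Its Gaussian-moment formula then gives $\E[e^{-G\psi^2}]=a_q^{-1/2}e^{-G\mu_q^2/a_q}$ with $a_q=1+2GP_q$, and symmetrically at $\hpi$. The sum of these two terms is exactly $1-\yhat_q$ from Eq.~\eqref{eq:kappa}, so $\E[z_G(\psi_q)]\le1-\yhat_q$. Summing with weights $\pi_q$ and using $\sum_q\pi_q=M$ together with the definition $\Yhat_t(\pst)=\frac1M\sum_q\pi_q(\pst)\yhat_q$ gives $\sum_q\pi_q(1-\yhat_q)=M-M\Yhat_t(\pst)$, which is the first claim.

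\emph{Step 3 (admissible target).} Eq.~\eqref{eq:pace} guarantees $\Yhat_t(\pst)\ge\max_\psi\Yhat_t(\psi)-\varepsilon$ for the chosen target. Substituting this into the first claim bounds the expected number of wasted groups by $M(1-\max_\psi\Yhat_t+\varepsilon)$. Each such group costs $G$ rollouts, so multiplying by $G$ gives the second claim. One caveat must be checked: the rate limit $\Delta$ can clip the target away from $\tilde\psi_t$. The statement is about the admissible target itself, so I will state that the bound applies when the clipped target is admissible, which includes the warm-up-free steady state.

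\emph{Main obstacle: truncation.} The hard step is Step~2 under truncation. The closed-form expectation integrates over all of $\mathbb{R}$, but the bound $\cos^{2G}\psi\le e^{-G\psi^2}$ is only proved on $[0,\hpi]$. For a belief truncated to $[0,\hpi]$, the normalized truncated density is $\phi/Z$ with mass $Z<1$, so naively integrating the pointwise bound gives $1/Z$ times a partial integral, which could exceed the untruncated value. My first approach is to read ``calibrated up to truncation'' the way the paper's proof of Proposition~\ref{prop:deadzone} does: the belief is taken to be supported on $[0,\hpi]$ with Gaussian moments in the closed form, so the pointwise bound integrates directly. As a fallback, I would extend $z_G$ to $\mathbb{R}$ by $\cos^{2G}\psi+\sin^{2G}\psi$. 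This extension is at most $1$ everywhere, while the Gaussian bound exceeds $1$ outside $[0,\hpi]$ near the boundaries, so the inequality on the tails needs a separate check. If that check fails, I would state the result for untruncated Gaussian beliefs with an $O(\text{tail mass})$ correction.
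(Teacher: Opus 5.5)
Your Steps~1--3 are the paper's proof: linearity over candidates, Proposition~\ref{prop:deadzone} for $\E[z_G(\psi_q)]\le1-\yhat_q$, admissibility from Eq.~\eqref{eq:pace}, and a factor $G$ per wasted group. For the rate limit, the paper argues differently: when $\Yhat_t$ is unimodal, the clipped target lies between admissible targets and is therefore admissible. Restricting the claim to the admissible target, as you do, is also consistent with the wording of the statement.

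The truncation problem you flag is a real gap, and the paper does not close it. It relies on the closing remark in the proof of Proposition~\ref{prop:deadzone}, which is exactly your ``first approach.'' That step is not valid. A belief supported on $[0,\hpi]$ is not $\N(\mu_q,P_q)$, so the closed form does not compute its expectation.

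The inequality can in fact fail. Take $G=8$, $\mu_q=0$, $P_q=1$. Then $1-\yhat_q=(1+e^{-2\pi^2/17})/\sqrt{17}\approx0.32$. The truncated belief has mass $Z_q=\Phi(\pi/2)-\tfrac12\approx0.44$ on $[0,\hpi]$, and numerical integration gives $\E[z_G(\psi_q)]\approx0.38$. The renormalization by $1/Z_q$ outweighs the slack the closed form gains by integrating over the discarded tails.

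Your periodic-extension fallback also fails. At $\psi=-\hpi$ the extension equals $1$, while the Gaussian bound is $e^{-G\pi^2/4}+e^{-G\pi^2}$. It would only concern untruncated beliefs anyway.

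Your last fallback points the right way, but the exact correction is multiplicative. Since $z_G(\psi)\le e^{-G\psi^2}+e^{-G(\pi/2-\psi)^2}$ on $[0,\hpi]$ and the right-hand side is nonnegative on $\mathbb{R}$, you get $\E[z_G(\psi_q)]\le(1-\yhat_q)/Z_q$, where $Z_q$ is the $\N(\mu_q,P_q)$-mass of $[0,\hpi]$. The first bound then becomes $\sum_q\pi_q(1-\yhat_q)/Z_q\le M(1-\Yhat_t(\pst))/\min_q Z_q$, and the second bound picks up the same factor $1/\min_q Z_q$. The form stated in the proposition holds only under the untruncated idealization that the paper adopts implicitly.
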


\emph{Proof.} By linearity, the expected number of zero-variance candidates is $\sum_q\pi_q\E[z_G(\psi_q)]$. Proposition~\ref{prop:deadzone} gives $\E[z_G(\psi_q)]\le1-\yhat_q$, and admissibility gives $\Yhat_t(\tilde\psi_t)\ge\max_\psi\Yhat_t(\psi)-\varepsilon$. Each zero-variance candidate wastes $G$ rollouts. When $\Yhat_t$ is unimodal, the rate-limited target lies between two admissible targets and is itself admissible. \qed

Proposition~\ref{prop:waste} is the formal content of ``predicting waste before paying for it''. DS pays $G$ rollouts for every zero-variance group it discards, whereas \method{} bounds the expected number of such groups before generation.

\subsection{Numerical Verification}
\label{app:numerics}

Table~\ref{tab:numerics} reports exact computations (not experiments) that check every analytic statement; the script is \texttt{code/theory\_checks.py}.

\begin{table*}[t]
\centering
\footnotesize
\setlength{\tabcolsep}{5pt}
\begin{tabular}{p{0.50\textwidth} p{0.44\textwidth}}
\toprule
\textbf{Check} & \textbf{Result} \\
\midrule
Prop.~\ref{prop:uniform} vs.\ the finite-group weight of \citet{davis2025objective} & max.\ relative difference $<10^{-14}$ for $G\in\{4,8,16,32\}$\\
$\max_p\omega_G(p)$ vs.\ the bound $\sqrt{1-1/G}$ & $0.928$ vs.\ $0.935$ ($G{=}8$); $0.967$ vs.\ $0.968$ ($G{=}16$)\\
$\min_{p\in[0.1,0.9]}\omega_G(p)$ & $0.51$, $0.70$, $0.85$, $0.94$ for $G=4,8,16,32$\\
Ramp slope $\omega_G/\psi$ as $\psi\to0$ & $2.61$ vs.\ $\sqrt{G-1}=2.65$ ($G{=}8$)\\
Prop.~\ref{prop:deadzone} boundary-layer bound & holds on $[0,\hpi]$ for $G\in\{4,8,16\}$\\
Closed-form $\E[e^{-G\psi^2}]$ under a Gaussian belief & matches Monte Carlo to $5\times10^{-4}$\\
$\mathrm{Var}[\hat\psi_{\mathrm{A}}]$ at $n{=}8$, $p=0.2/0.35/0.5$ & $0.0271/0.0295/0.0297$ vs.\ $1/(4n+2)=0.0294$\\
$\mathrm{Var}[\mathrm{logit}\,\hat p]$ vs.\ delta-method value at $p{=}0.05$, $n{=}8$ & $0.23$ vs.\ $2.63$\\
Posterior $z^2$ after one group, observation model of Eq.~\eqref{eq:obs} & $0.85$ overall; $0.57$--$0.98$ across pass-rate bins\\
Posterior $z^2$ if the Anscombe estimate is also used for zero-variance groups & $1.72$--$1.84$ in the two edge bins (over-confident)\\
Prop.~\ref{prop:kernel} mode $\arcsin(1/\sqrt{2k})$ & exact for $k\in\{1,1.5,2,3,4,8,16\}$\\
Total variation between $H_k$ and its Laplace Gaussian & $0.050$--$0.069$ for $k\in[1,16]$\\
\bottomrule
\end{tabular}
\caption{Exact numerical checks of the analytic results (script \texttt{code/theory\_checks.py}).}
\label{tab:numerics}
\end{table*}

\section{Other Normalizations}
\label{app:drgrpo}

Removing the standard deviation, as in Dr.~GRPO~\citep{liu2025understanding} or leave-one-out baselines~\citep{ahmadian2024back}, changes Proposition~\ref{prop:uniform}. With $\hat A_{q,i}=r_{q,i}-\bar r_q$, the same conditioning argument gives $\E[\hat g_q\mid m]=\frac{m(G-m)}{G^2}\frac{\nabla p}{p(1-p)}$ and, since $\E[m(G-m)]=G(G-1)p(1-p)$, $\E[\hat g_q]=\frac{G-1}{G}\nabla p$. The native coordinate is then $p$ itself: the objective is the mean pass rate, and in arc length the per-prompt weight is $\sin2\psi$ instead of the plateau $\omega_G$. Propositions~\ref{prop:deadzone}--\ref{prop:kernel} are statements about sampling, evidence, and objectives, so they are unaffected. Only the sampler--objective map changes, to $F_w'(\psi)=\frac{G-1}{G}w(\psi)\sin2\psi$. For such losses, \method{} keeps the beliefs, $\yhat$, and pacing, and divides the kernel by $\sin2\psi$, clipped at the grid ends, so that the implicit objective remains the targeted pass@$k$. Table~\ref{tab:compat} shows that this variant retains most of the gain.

\section{Algorithm and Implementation Details}
\label{app:impl}

\paragraph{Belief bookkeeping.}
For each prompt we store $(\mu_q,P_q,t_q)$, where $t_q$ is the index of the last observation. The prediction step of Eq.~\eqref{eq:predict} is applied lazily: when a prompt is scored at step $t$, its belief is advanced by $t-t_q$ steps using the current $(v_t,Q_t)$, which is exact for constant parameters and avoids touching the whole pool each step.

\paragraph{Online drift and diffusion.}
For the set $\mathcal{R}_t$ of revisited prompts observed at step $t$, let $\nu_q=z_q-\mu_q^{\text{pred}}$ be the innovation, $S_q=P_q^{\text{pred}}+R_q$ its predicted variance, and $\Delta_q=t-t_q$ the gap. With rate $\beta=0.1$,
\begin{align*}
v_{t+1}&=v_t+\beta\,\bar\nu_t,\quad
\bar\nu_t=\frac{\sum_{\mathcal{R}_t}\nu_q}{\sum_{\mathcal{R}_t}\Delta_q\,\eta_q},\\
Q_{t+1}&=\max\{10^{-5},(1-\beta)Q_t+\beta[Q_t+\bar e_t]_+\},
\end{align*}
with mobility $\eta_q=\max(\sin2\mu_q,0.05)$ and $\bar e_t=\mathrm{mean}(\nu_q^2-S_q)/\mathrm{mean}(\Delta_q)$. These updates match the mean and the variance of the innovations~\citep{mehra1970identification}, and both estimates stabilize within the warm-up.

\paragraph{Priors.}
Unseen prompts start at $\N(\pi/4,\pi^2/48)$, the moments of the uniform Jeffreys prior on the arc. Once at least 100 prompts have been observed, unseen prompts use the empirical mean of observed $\mu_q$ and the empirical variance of $\mu_q$ plus the mean $P_q$ (an empirical-Bayes prior), which lowers the score of unseen prompts in pools dominated by dead zones.

\paragraph{Inclusion probabilities and the pacing grid.}
For target $\psi$, Gumbel-top-$M$ at temperature $T$ samples without replacement with Plackett--Luce weights $s_q^{1/T}$~\citep{kool2019gumbel}. We approximate its inclusion probabilities by $\pi_q=\min(1,c\,s_q^{1/T})$, with $c$ found by 40 steps of geometric bisection so that $\sum_q\pi_q=M$. The grid $\Psi$ has 41 equally spaced points between $\arcsin(1/\sqrt{2G})$ (pass@$G$) and $\hpi-\arcsin(1/\sqrt{2G})$ (pass$^G$); a group of $G$ rollouts cannot distinguish objectives beyond these. Warm-up lasts $\lceil N/M\rceil$ updates (about 54 for DAPO-Math-17k with $M=320$), and $\Delta=0.005$ rad bounds the per-update target change.

\paragraph{Complexity.}
Each step performs, for every prompt, one belief advance, one kernel overlap, and one dead-zone expectation per grid point, plus 40 bisection passes per grid point: roughly $N\cdot|\Psi|\cdot40\approx2.8\times10^7$ scalar operations for $N{=}17$k, or about $0.05$\,s of vectorized CPU time, which is negligible next to a rollout step of $\approx4$\,min.

\paragraph{Integration.}
In verl, \method{} replaces the training sampler, which yields $M$ prompt indices per step, and adds a hook after reward computation. The hook updates the beliefs from the success counts and filters the batch to the selected informative groups, at the same place where DAPO's filter operates. No change is made to the actor, the critic-free advantage, or the loss.

\begin{algorithm}[t]
\footnotesize
\DontPrintSemicolon
\SetAlgoLined
\caption{Arc-length belief update (one prompt)}
\label{alg:belief}
\KwInput{$(\mu,P,t_q)$, step $t$, successes $m$, $G$, $(v,Q)$}
\For{$\tau=t_q+1,\dots,t$}{$\mu\leftarrow\Pi[\mu+v\sin2\mu]$;\ $P\leftarrow P+Q$\;}
\eIf{$0<m<G$}{$z\leftarrow\arcsin\sqrt{(m+3/8)/(G+3/4)}$;\ $R\leftarrow1/(4G+2)$}
{$z\leftarrow 0$ if $m=0$ else $\hpi$;\ $R\leftarrow1/(2G)$}
$K\leftarrow P/(P+R)$;\ $\mu\leftarrow\Pi[\mu+K(z-\mu)]$;\ $P\leftarrow(1-K)P$;\ $t_q\leftarrow t$\;
\Return{$(\mu,P,t_q)$ and innovation $z-\mu^{\text{pred}}$}
\end{algorithm}

\section{Experimental Setup}
\label{app:setup}

\paragraph{Data and rewards.}
The training pool is DAPO-Math-17k~\citep{yu2025dapo}, deduplicated to unique prompts with integer answers. For the pool-shift study we use the MATH training split~\citep{hendrycks2021math}. The reward is $1$ if the final boxed answer is equivalent to the reference under symbolic and numeric normalization, and $0$ otherwise; there is no format reward. Prompts ask the model to reason step by step and to put the final answer in \texttt{\textbackslash boxed\{\}}, with the chat template of each backbone.

\paragraph{Evaluation.}
AIME24/25 (30 problems each) report mean accuracy over 32 samples, AMC23 (40 problems) over 16 samples, and MATH500~\citep{lightman2024verify}, Minerva Math~\citep{lewkowycz2022minerva}, and OlympiadBench~\citep{he2024olympiadbench} over 4 samples, all at temperature $0.6$, top-$p$ $0.95$, and the training response limit. pass@$k$ uses the unbiased estimator of \citet{chen2021codex} with 32 samples per problem. We evaluate the final checkpoint and report the mean over three seeds; standard deviations are in Table~\ref{tab:seeds}.

\paragraph{Hyperparameters.}
Table~\ref{tab:hparams} lists the shared configuration. \method{}'s hyperparameters were chosen on a held-out set of 500 DAPO prompts with Qwen2.5-Math-1.5B and then fixed for all backbones and pools.

\begin{table}[t]
\centering
\footnotesize
\setlength{\tabcolsep}{3pt}
\begin{tabular}{l l}
\toprule
\textbf{Hyperparameter} & \textbf{Value} \\
\midrule
Framework & verl + vLLM, bf16 \\
Policy updates & 300 \\
Prompts per update $B$ & 256 \\
Group size $G$ & 8 \\
Mini-batch (prompts) & 64 (4 steps per update) \\
Optimizer & AdamW, lr $10^{-6}$, wd $0.01$ \\
Clip range & $0.2$ \\
KL / entropy coefficient & $0$ / $0$ \\
Rollout temperature, top-$p$ & $1.0$, $1.0$ \\
Max prompt length & 1024 \\
Max response length & 3072 (Q2.5-Math); 8192 (Q3) \\
Hardware & 8$\times$H100 80GB \\
\midrule
\multicolumn{2}{l}{\textit{\method{}}}\\
Candidate margin $\rho$ & $0.25$ ($M=320$) \\
Yield slack $\varepsilon$ & $0.03$ \\
Gumbel temperature $T$ & $0.3$ \\
Target grid size $|\Psi|$ & 41 \\
Target rate limit $\Delta$ & $0.005$ rad per update \\
Warm-up & $\lceil N/M\rceil\approx54$ updates \\
Estimator rate $\beta$ & $0.1$ \\
Initial belief & Jeffreys, then empirical Bayes \\
\bottomrule
\end{tabular}
\caption{Training and \method{} hyperparameters.}
\label{tab:hparams}
\end{table}

\paragraph{Baselines.}
All baselines share the GRPO loss and Table~\ref{tab:hparams}. \emph{DS}: candidate batches of $B$ prompts are generated until $B$ informative groups are collected (at most 8 rounds). \emph{GRESO}: probabilistic pre-rollout skipping of prompts with recent zero-variance history, followed by DS-style filling, with the published skip schedule. \emph{MoPPS}: Beta$(1,1)$ priors with the published decay, Thompson sampling, and prompts whose sampled pass rates are closest to $0.5$. \emph{DPS}: the published hidden-Markov configuration. \emph{KGPS}: logit-space Kalman filter with a uniform warm-up epoch, process noise coupled to the parameter change, and $\E[p(1-p)]$ by 5-point Gauss--Hermite quadrature. \emph{CurES}: softmax over $\sqrt{p(1-p)}$ of posterior-mean pass rates with the published temperature. \emph{GCS}: Gaussian weights ($\mu{=}0.5$, $\sigma{=}0.35$) over EMA pass rates ($\alpha{=}0.9$), as in FG-ExPO but without its KL schedule, to isolate sampling. \emph{AdaRFT}: static difficulty from 8 base-model samples per prompt, with the target difficulty moved by the batch reward toward $0.5$. Predictive baselines sample exactly $B$ prompts per update.

\paragraph{Compute.}
One 7B run takes 19.8\,h (GRPO) to 41.6\,h (DS) on 8$\times$H100 (Table~\ref{tab:cost}); Table~\ref{tab:wall} breaks the time into generation, reward computation, policy update, and selection. The full study, covering three backbones, eleven methods, three seeds, ablations, and appendix settings, used roughly $3\times10^4$ H100-hours.

\begin{table}[h]
\centering
\footnotesize
\setlength{\tabcolsep}{3pt}
\input{tables/wall.tex}
\caption{Wall-clock breakdown (hours) on Qwen2.5-Math-7B, 8$\times$H100. Selection is \method{}'s belief update, scoring, and pacing, which is negligible; its extra time is generating the candidate margin.}
\label{tab:wall}
\end{table}

\paragraph{Reproducibility checklist.}
(i) The belief update, scoring, sampling, and pacing are fully specified by Eqs.~\eqref{eq:predict}--\eqref{eq:pace}, Algorithm~\ref{alg:belief}, and the paragraphs above. (ii) All hyperparameters are listed in Table~\ref{tab:hparams}, and none is tuned per backbone. (iii) Each baseline uses its published configuration with the shared GRPO settings. (iv) Evaluation uses fixed sampling parameters and three training seeds. (v) The analytic figures and Table~\ref{tab:numerics} are produced by the scripts in \texttt{code/}. (vi) The only randomness in \method{} beyond the policy's sampling is the Gumbel noise of candidate selection, which is seeded.

\section{Additional Results}
\label{app:extra}

\paragraph{Seed variability.}
Table~\ref{tab:seeds} reports the six-benchmark average with standard deviations over three seeds. \method{}'s gain over DS exceeds three standard deviations on every backbone, and its variance is the lowest among the compared methods. We attribute this to steadier batches: the informative share of its update batch varies by less than two points across steps.

\begin{table}[h]
\centering
\footnotesize
\setlength{\tabcolsep}{3pt}
\input{tables/seeds.tex}
\caption{Six-benchmark average, mean $\pm$ s.d.\ over three seeds.}
\label{tab:seeds}
\end{table}

\paragraph{Per-benchmark dynamics.}
Figure~\ref{fig:bench} shows training curves per benchmark on Qwen2.5-Math-7B. On MATH500 the methods converge together; the separation arises on AIME and AMC and opens after the warm-up, when the paced target starts to move.

\begin{figure*}[t]
\centering
\includegraphics[width=0.92\textwidth]{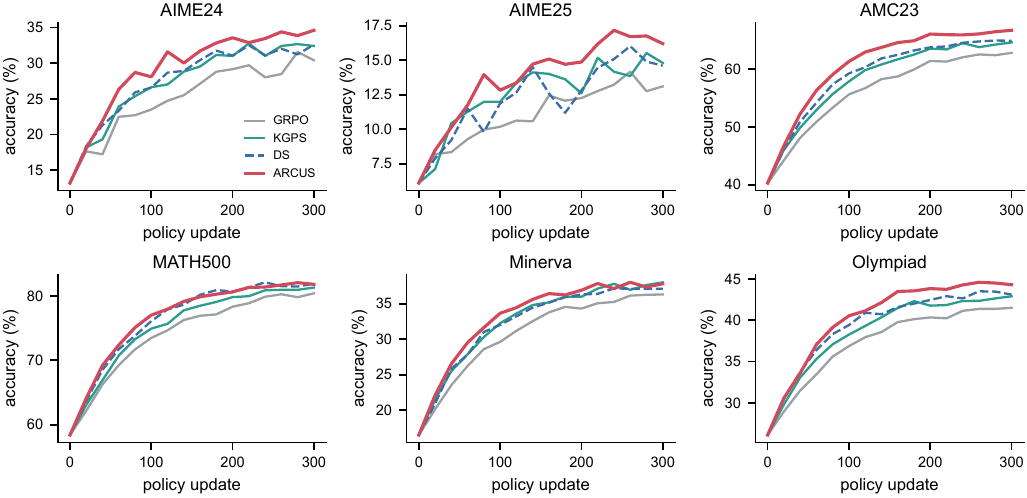}
\caption{Per-benchmark accuracy over training on Qwen2.5-Math-7B.}
\label{fig:bench}
\end{figure*}

\paragraph{Extended ablations.}
Table~\ref{tab:ablation-appx} complements Table~\ref{tab:ablation}. Using the Anscombe observation for zero-variance groups instead of the boundary-layer likelihood costs $0.57$. This is the calibration failure of Figure~\ref{fig:geometry}c: the belief of a dead prompt stays near $p\approx0.04$ instead of collapsing into the dead zone. The exact pass@$k$ kernel, evaluated by quadrature, is indistinguishable from the Gaussian, which confirms that the Gaussian is a convenience, not an approximation that costs accuracy. A larger candidate margin ($\rho=0.5$) buys $+0.12$ for 20\% more rollouts, so $\rho=0.25$ is near the knee of the cost curve (Figure~\ref{fig:results}d).

\begin{table}[h]
\centering
\footnotesize
\setlength{\tabcolsep}{2.4pt}
\resizebox{\columnwidth}{!}{\input{tables/ablation_appx.tex}}
\caption{Additional ablations on Qwen2.5-Math-7B.}
\label{tab:ablation-appx}
\end{table}

\paragraph{Objective family.}
Table~\ref{tab:sweep} reports the fixed-target sweep behind Figure~\ref{fig:results}c. Moving the target from pass$^4$ to pass@4 raises pass@16 monotonically from $64.9$ to $69.8$, while pass@1 peaks between pass@1 and pass@2; pass@8 fails because its target sits inside the dead zone (47\% yield). Paced \method{} is better on pass@1 than every fixed target and nearly the best on pass@16. The best trade-off is thus not a fixed objective but a moving one.

\begin{table}[h]
\centering
\footnotesize
\setlength{\tabcolsep}{3pt}
\resizebox{\columnwidth}{!}{\input{tables/objective_sweep.tex}}
\caption{Fixed targets along the objective family vs.\ paced \method{} (Qwen2.5-Math-7B; six-benchmark averages).}
\label{tab:sweep}
\end{table}

\paragraph{Group size.}
Table~\ref{tab:group} varies $G$ at a matched number of updates. DS's overhead shrinks with $G$, because fewer groups are zero-variance, but \method{}'s advantage persists. Its final target hardens with $G$ as predicted by the frontier analysis of Appendix~\ref{app:frontier}.

\begin{table}[h]
\centering
\footnotesize
\setlength{\tabcolsep}{3pt}
\input{tables/group_size.tex}
\caption{Effect of group size on Qwen2.5-Math-7B. Frontier: closed-form sharp-belief target for $\varepsilon=0.03$.}
\label{tab:group}
\end{table}

\paragraph{Sensitivity and coverage.}
Figure~\ref{fig:sens} sweeps the yield slack $\varepsilon$ and the Gumbel temperature $T$. Both have broad optima, and every tested value beats DS. Too small an $\varepsilon$ freezes the target near pass@1, and too large an $\varepsilon$ admits wasteful targets; $T\to0$ over-exploits stale beliefs, while $T=1$ dilutes the candidate set. On AIME24, \method{} improves pass@$k$ at every $k\le32$ and widens the gap at large $k$ ($66.4$ vs.\ $60.2$ for GRPO at $k=32$), whereas GRPO's pass@32 barely exceeds the base model's ($58.1$), consistent with reports that RLVR tends to narrow coverage~\citep{yue2025does}.

\begin{figure*}[t]
\centering
\includegraphics[width=\textwidth]{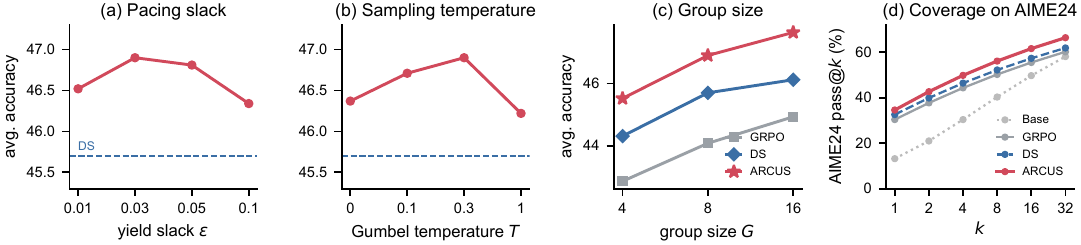}
\caption{Sensitivity on Qwen2.5-Math-7B: (a) yield slack, (b) Gumbel temperature, (c) group size, and (d) pass@$k$ on AIME24.}
\label{fig:sens}
\end{figure*}

\paragraph{Additional backbones and pools.}
Table~\ref{tab:extra} covers a long-chain-of-thought model (DeepSeek-R1-Distill-Qwen-1.5B~\citep{deepseekai2025r1} with an 8k response limit), a non-Qwen family (Llama-3.2-3B-Instruct~\citep{grattafiori2024llama3}), and a different pool (MATH-train for Qwen2.5-Math-7B). The Llama backbone rules out a Qwen-specific artifact~\citep{shao2025spurious}. Its pool is dominated by all-fail groups, so DS costs $\times3.5$ rollouts, and \method{}'s dead-zone factor matters most there.

\begin{table}[h]
\centering
\footnotesize
\setlength{\tabcolsep}{2.2pt}
\resizebox{\columnwidth}{!}{\input{tables/extra.tex}}
\caption{Additional backbones and training pools (six-benchmark average; DS roll.: DS rollouts relative to GRPO; \method{} uses $\times1.25$).}
\label{tab:extra}
\end{table}

\begin{table*}[t]
\centering
\footnotesize
\setlength{\tabcolsep}{6pt}
\input{tables/extra_full.tex}
\caption{Per-benchmark results for the additional backbones and the MATH-train pool (accuracy, \%; same protocol as Table~\ref{tab:main}).}
\label{tab:extra-full}
\end{table*}

Table~\ref{tab:extra-full} gives the per-benchmark breakdown. As on the main backbones, the gains concentrate on AIME and AMC. For the long-CoT model, \method{} is below DS only on MATH500 ($86.3$ vs.\ $86.4$).

\paragraph{Batch size, learning rate, and pool size.}
Table~\ref{tab:robust} varies the prompts per update, the learning rate, and the size of the training pool (random subsets of DAPO-Math-17k). \method{}'s margin over DS stays between $+0.9$ and $+1.3$ in every setting. On smaller pools, prompts are revisited more often, so beliefs are sharper, but the pool also holds fewer informative prompts near the frontier. The two effects roughly cancel, and the margin shrinks only slightly, to $+0.9$ on the 4k pool.

\begin{table}[h]
\centering
\footnotesize
\setlength{\tabcolsep}{4pt}
\input{tables/robust.tex}
\caption{Robustness to batch size $B$, learning rate, and pool size on Qwen2.5-Math-7B (six-benchmark average).}
\label{tab:robust}
\end{table}

\paragraph{Cold start.}
During warm-up every unseen prompt shares the same prior, so the first pass over the pool is nearly uniform. Table~\ref{tab:coldstart} compares ways of initializing the beliefs. The empirical-Bayes prior of Appendix~\ref{app:impl} is free and already helps. A reference-model prior, i.e.\ a difficulty estimate from a smaller model's pass rates in the spirit of \citet{sha2026thinkprior}, and an offline probe with two base-model samples per prompt (3.4\% extra rollouts) raise the warm-up yield from 51\% to 62--66\% and add $0.08$--$0.13$ to the final average. \method{} is therefore compatible with cold-start priors, but it does not depend on them.

\begin{table}[h]
\centering
\footnotesize
\setlength{\tabcolsep}{3pt}
\resizebox{\columnwidth}{!}{\input{tables/coldstart.tex}}
\caption{Initial beliefs on Qwen2.5-Math-7B. Warm-up yield: informative share of generated groups during the first $\lceil N/M\rceil$ updates.}
\label{tab:coldstart}
\end{table}

\paragraph{Other policy losses.}
Table~\ref{tab:compat} swaps the GRPO loss for Dr.~GRPO, the DAPO token-level loss with clip-higher, and RLOO. \method{} improves on DS for every loss; for Dr.~GRPO and RLOO we use the adapted kernel of Appendix~\ref{app:drgrpo}. The gain is smallest for losses without standard-deviation normalization, whose native coordinate is $p$ rather than $\psi$.

\begin{table}[h]
\centering
\footnotesize
\setlength{\tabcolsep}{3pt}
\input{tables/compat.tex}
\caption{Compatibility with other group-based losses (Qwen2.5-Math-7B, six-benchmark average).}
\label{tab:compat}
\end{table}

\paragraph{Training dynamics.}
Figure~\ref{fig:dynamics} tracks response length, token entropy, and filter consistency. Training on harder prompts lengthens responses, and \method{} keeps entropy highest ($0.27$ nats at the end vs.\ $0.17$ for GRPO), in line with its better pass@$k$. The normalized innovation squared of the arc-length filter settles near one after warm-up: the filter's predicted uncertainty matches its errors.

\begin{figure*}[t]
\centering
\includegraphics[width=0.92\textwidth]{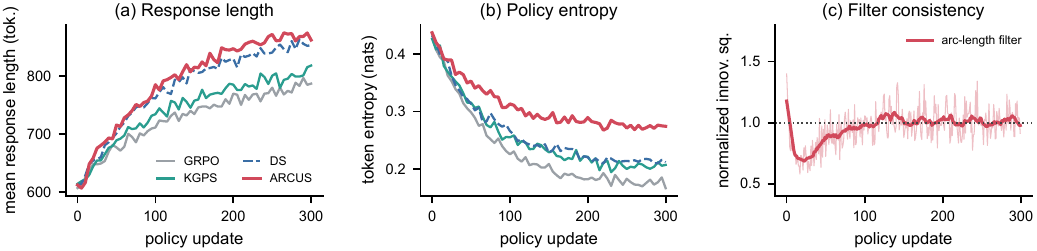}
\caption{Training dynamics on Qwen2.5-Math-7B: (a) mean response length, (b) token entropy, and (c) normalized innovation squared of \method{}'s filter (1 = consistent).}
\label{fig:dynamics}
\end{figure*}

\section{Discussion}
\label{app:discussion}

\paragraph{Why one coordinate explains four facts.}
The four propositions share one cause: the Fisher information of a Bernoulli trial is constant in $\psi$. This makes GRPO's update flat, because standard-deviation normalization divides by $\sqrt{p(1-p)}$, the square root of the Fisher information in $p$. It makes evidence homoscedastic, because each rollout carries information $4$ about $\psi$. And it makes dead zones and objective kernels Gaussian, because a run of $n$ identical outcomes, or a success among $k$ attempts, is a product of $n$ or $k$ Bernoulli likelihoods whose log-curvature in $\psi$ is $2n$ or $4k$. None of this is specific to mathematics; it holds for any binary verifier.

\paragraph{When the geometry is not native.}
Without standard-deviation normalization (Appendix~\ref{app:drgrpo}) the optimizer's native coordinate becomes $p$, although sampling and evidence still live on the arc. With continuous rewards, a variance-stabilizing map depends on the reward distribution; for bounded rewards, binarizing at a threshold or using a Beta-mean model is a natural extension. With very small groups ($G\le3$), the two dead zones cover most of the arc, and any selector struggles.

\paragraph{Sampler-side versus advantage-side objectives.}
\citet{davis2025objective} show that advantages choose the objective; Corollary~\ref{cor:objective} shows that samplers do too. The two routes differ in cost. Reshaping advantages to target pass@$k$ still pays for every zero-variance group, while reshaping the sampler avoids paying for them. The routes also compose: a pass@$k$ advantage combined with a pass@$k$-matched sampler would target the same objective with less waste, which we leave to future work.

\paragraph{The easy-to-hard pattern.}
Classical curricula schedule difficulty; \method{} schedules an objective and lets waste decide how fast it can move. The emergent trajectory (Figure~\ref{fig:mechanism}a) starts at pass@1, because early beliefs cannot certify harder targets, and hardens as beliefs sharpen. This is a concrete, measurable version of training at the frontier of learnability.

\paragraph{Failure modes.}
(i) Pools with very few informative prompts: the pacing retreats to the most informative target, and the candidate margin may not fill the batch; \method{} then degrades gracefully to a well-calibrated predictive selector. (ii) Strongly heterogeneous transfer: a single global drift underestimates the movement of fast-improving topics; per-cluster drift is a straightforward extension. (iii) Cost heterogeneity: all-fail responses are longer, so a token-cost-aware score $s_q/\E[\text{tokens}]$ could save more compute than rollout counts suggest. (iv) Adversarially duplicated prompts share evidence that the filter treats as independent.

\paragraph{Practical recommendations.}
The defaults ($\rho{=}0.25$, $\varepsilon{=}0.03$, $T{=}0.3$) transferred across all backbones and pools we tried. If the rollout budget must equal GRPO's, use $\rho{=}0$, which keeps most of the gain (Table~\ref{tab:main}). If the pool is dominated by all-fail prompts, as for weaker models, a smaller $\varepsilon$ keeps the target closer to pass@1 until beliefs sharpen. If pass@$k$ coverage matters more than pass@1, a larger $\varepsilon$ or a fixed pass@$k$ target is the knob to turn (Table~\ref{tab:sweep}). Monitoring the predicted and realized yield together is a cheap diagnostic: a persistent gap signals miscalibrated beliefs, typically a drift estimate that has not yet converged.

\paragraph{Broader impact.}
\method{} reduces the compute and energy of RLVR by about half relative to dynamic sampling at equal or better accuracy. Cheaper post-training lowers barriers for academic groups and also for misuse. The method adds no new data or capabilities beyond those of the underlying training pipeline.

%% file: tables/wall.tex
\begin{tabular}{l cccc c}
\toprule
\textbf{Method} & \textbf{Gen.} & \textbf{Reward} & \textbf{Update} & \textbf{Select} & \textbf{Total} \\
\midrule
GRPO & 13.9 & 0.4 & 5.5 & $<$0.01 & 19.8 \\
KGPS & 14.0 & 0.4 & 5.6 & 0.01 & 20.0 \\
DS & 34.9 & 1.2 & 5.5 & $<$0.01 & 41.6 \\
\rowcolor{oursbg}\method{}$_{\rho=0}$ & 13.9 & 0.4 & 5.7 & 0.01 & 20.0 \\
\rowcolor{oursbg}\method{} & 18.0 & 0.5 & 5.6 & 0.01 & 24.1 \\
\bottomrule
\end{tabular}

%% file: tables/seeds.tex
\begin{tabular}{l ccc}
\toprule
\textbf{Method} & \textbf{Q2.5-M-1.5B} & \textbf{Q2.5-M-7B} & \textbf{Q3-4B-Base} \\
\midrule
GRPO & $34.82\pm0.41$ & $44.08\pm0.37$ & $46.27\pm0.39$ \\
MoPPS & $35.73\pm0.38$ & $45.03\pm0.35$ & $47.27\pm0.36$ \\
KGPS & $36.32\pm0.36$ & $45.67\pm0.34$ & $47.82\pm0.33$ \\
DS & $36.50\pm0.33$ & $45.70\pm0.30$ & $48.00\pm0.31$ \\
\rowcolor{oursbg}\method{}$_{\rho=0}$ & $36.97\pm0.31$ & $46.25\pm0.30$ & $48.50\pm0.29$ \\
\rowcolor{oursbg}\method{} & $37.65\pm0.29$ & $46.90\pm0.27$ & $49.20\pm0.28$ \\
\bottomrule
\end{tabular}

%% file: tables/ablation_appx.tex
\begin{tabular}{l c c c}
\toprule
\textbf{Configuration} & \textbf{Avg.} & $\bm{\Delta}$ & \textbf{Inf.\,(\%)} \\
\midrule
\rowcolor{oursbg}\method{} (full) & \textbf{46.90} & -- & 97 \\
\midrule
\multicolumn{4}{l}{\textit{Belief dynamics}}\\
\quad No mobility-weighted drift & 46.52 & $-0.38$ & 95 \\
\quad Jeffreys prior only & 46.71 & $-0.19$ & 96 \\
\quad Anscombe obs.\ for zero-var.\ groups & 46.33 & $-0.57$ & 93 \\
\midrule
\multicolumn{4}{l}{\textit{Objective kernel}}\\
\quad Exact pass@$k$ kernel (quadrature) & 46.87 & $-0.03$ & 97 \\
\quad Sampling $\propto$ score ($T{=}1$) & 46.22 & $-0.68$ & 91 \\
\quad $\omega_G$ in place of $\hat y$ & 46.71 & $-0.19$ & 93 \\
\midrule
\multicolumn{4}{l}{\textit{Sampling and selection}}\\
\quad $\rho{=}0.10$ & 46.58 & $-0.32$ & 90 \\
\quad $\rho{=}0.50$ & 47.02 & $+0.12$ & 99 \\
\bottomrule
\end{tabular}

%% file: tables/objective_sweep.tex
\begin{tabular}{l c c c c}
\toprule
\textbf{Target} & $p^\star$ & \textbf{pass@1} & \textbf{pass@16} & \textbf{Yield} \\
\midrule
pass$^4$ & 0.875 & 44.61 & 64.9 & 72\% \\
pass$^2$ & 0.75 & 45.43 & 66.1 & 85\% \\
pass@1 & 0.5 & 46.28 & 67.6 & 88\% \\
pass@1.5 & 0.333 & 46.47 & 68.6 & 84\% \\
pass@2 & 0.25 & 46.31 & 69.3 & 78\% \\
pass@4 & 0.125 & 45.62 & 69.8 & 63\% \\
pass@8 & 0.0625 & 44.35 & 68.9 & 47\% \\
\midrule
\rowcolor{oursbg}\method{} & paced & \textbf{46.90} & 69.6 & 79\% \\
GRPO & -- & 44.08 & 66.3 & -- \\
DS & -- & 45.70 & 67.4 & -- \\
KGPS & -- & 45.67 & 67.1 & -- \\
\bottomrule
\end{tabular}

%% file: tables/group_size.tex
\begin{tabular}{c ccc c c}
\toprule
$G$ & \textbf{GRPO} & \textbf{DS} (roll.) & \textbf{\method{}} & \textbf{final} $p^\star_T$ & \textbf{frontier} \\
\midrule
4 & 42.86 & 44.31 ($\times$3.37) & \textbf{45.52} & 0.43 & 0.40 \\
8 & 44.08 & 45.70 ($\times$2.41) & \textbf{46.90} & 0.29 & 0.34 \\
16 & 44.93 & 46.12 ($\times$2.02) & \textbf{47.64} & 0.18 & 0.20 \\
\bottomrule
\end{tabular}

%% file: tables/extra.tex
\begin{tabular}{l cccc c}
\toprule
\textbf{Setting} & \textbf{GRPO} & \textbf{KGPS} & \textbf{DS} & \textbf{\method{}} & \textbf{DS roll.} \\
\midrule
R1-Distill-1.5B (8k) & 47.86 & 48.95 & 49.12 & \textbf{50.21} & $\times$2.12 \\
Llama-3.2-3B-Inst. & 21.37 & 22.31 & 22.48 & \textbf{23.26} & $\times$3.46 \\
Q2.5-Math-7B, MATH pool & 43.15 & 44.30 & 44.46 & \textbf{45.38} & $\times$2.78 \\
\bottomrule
\end{tabular}

%% file: tables/extra_full.tex
\begin{tabular}{l cccccc c}
\toprule
\textbf{Method} & \textbf{AIME24} & \textbf{AIME25} & \textbf{AMC23} & \textbf{MATH500} & \textbf{Minerva} & \textbf{Olymp.} & \textbf{Avg.} \\
\midrule
\multicolumn{8}{c}{\textit{R1-Distill-1.5B (8k)}}\\
\midrule
GRPO & 33.4 & 25.1 & 71.0 & 85.2 & 30.4 & 42.1 & 47.87 \\
KGPS & 35.2 & 26.0 & 72.4 & 85.9 & 30.9 & 43.3 & 48.95 \\
DS & 35.0 & 26.4 & 72.9 & 86.4 & 30.8 & 43.2 & 49.12 \\
\rowcolor{oursbg}\method{} & 37.4 & 27.6 & 74.1 & 86.3 & 31.5 & 44.4 & 50.22 \\
\midrule
\multicolumn{8}{c}{\textit{Llama-3.2-3B-Inst.}}\\
\midrule
GRPO & 5.9 & 1.8 & 24.1 & 50.6 & 17.2 & 28.6 & 21.37 \\
KGPS & 6.7 & 2.3 & 26.2 & 51.9 & 18.1 & 28.7 & 22.32 \\
DS & 6.5 & 2.4 & 26.5 & 52.4 & 17.8 & 29.3 & 22.48 \\
\rowcolor{oursbg}\method{} & 7.8 & 2.9 & 27.4 & 52.6 & 18.4 & 30.5 & 23.27 \\
\midrule
\multicolumn{8}{c}{\textit{Q2.5-Math-7B, MATH pool}}\\
\midrule
GRPO & 28.6 & 12.2 & 61.9 & 81.3 & 35.7 & 39.2 & 43.15 \\
KGPS & 30.4 & 13.1 & 63.6 & 81.9 & 36.8 & 40.0 & 44.30 \\
DS & 30.7 & 13.3 & 63.9 & 82.4 & 36.2 & 40.3 & 44.47 \\
\rowcolor{oursbg}\method{} & 32.3 & 14.4 & 65.2 & 82.5 & 36.9 & 41.0 & 45.38 \\
\bottomrule
\end{tabular}

%% file: tables/robust.tex
\begin{tabular}{l ccc}
\toprule
\textbf{Setting} & \textbf{GRPO} & \textbf{DS} & \textbf{\method{}} \\
\midrule
$B{=}128$ & 43.52 & 45.11 & \textbf{46.33} \\
$B{=}256$ (default) & 44.08 & 45.70 & \textbf{46.90} \\
$B{=}512$ & 44.61 & 46.02 & \textbf{47.21} \\
lr $5{\times}10^{-7}$ & 43.12 & 44.71 & \textbf{45.98} \\
lr $2{\times}10^{-6}$ & 43.89 & 45.38 & \textbf{46.64} \\
\midrule
pool 4k & 43.61 & 44.95 & \textbf{45.87} \\
pool 8k & 43.93 & 45.38 & \textbf{46.51} \\
pool 17k & 44.08 & 45.70 & \textbf{46.90} \\
\bottomrule
\end{tabular}

%% file: tables/coldstart.tex
\begin{tabular}{l ccc}
\toprule
\textbf{Initial belief} & \textbf{Avg.} & \textbf{Warm-up yield} & \textbf{Extra roll.} \\
\midrule
Jeffreys moments only & 46.71 & 47\% & 0.0\% \\
\rowcolor{oursbg}Empirical Bayes (default) & 46.90 & 51\% & 0.0\% \\
Reference-model prior & 46.98 & 62\% & 0.0\% \\
Offline probe, 2 samples/prompt & 47.03 & 66\% & 3.4\% \\
\bottomrule
\end{tabular}

%% file: tables/compat.tex
\begin{tabular}{l ccc c}
\toprule
\textbf{Policy loss} & \textbf{Uniform} & \textbf{DS} & \textbf{\method{}} & $\bm{\Delta}$ vs.\ DS \\
\midrule
GRPO & 44.08 & 45.70 & \textbf{46.90} & +1.20 \\
Dr.~GRPO & 43.92 & 45.41 & \textbf{46.47} & +1.06 \\
DAPO loss & 44.61 & 46.03 & \textbf{47.18} & +1.15 \\
RLOO & 43.54 & 45.02 & \textbf{46.11} & +1.09 \\
\bottomrule
\end{tabular}